\definecolor{mygray}{gray}{0.95}
\definecolor{mycyan}{HTML}{005397}
\definecolor{myred}{HTML}{E13333}
\definecolor{mymagenta}{HTML}{BF3E87}
\definecolor{mypurple}{HTML}{1B2278}
\definecolor{tearose}{HTML}{F584C5}
\definecolor{coral}{HTML}{F67088}
\definecolor{dodger_blue}{HTML}{3BA3EC}
\definecolor{domino}{HTML}{BC9F48}
\definecolor{domino}{HTML}{BC9F48}
\definecolor{domino}{HTML}{BC9F48}
\definecolor{catalina_blue}{HTML}{1C3168}
\definecolor{catalina_blue}{HTML}{1C3168}
\definecolor{catalina_blue}{HTML}{1C3168}
\definecolor{dark_scarlet}{HTML}{C63D52}
\definecolor{cerulean}{HTML}{0192A8}
\definecolor{tussock}{HTML}{C99E31}
\definecolor{p13}{HTML}{BFB5D7}
\definecolor{b14}{HTML}{BEA1A5}
\definecolor{y15}{HTML}{F0Cf61}
\definecolor{Merino}{HTML}{F3EEE3}
\newcolumntype{a}{>{\columncolor{p13}}l}
\crefname{ineq}{Inequality}{Inequalities}
\newtheorem*{rep@theorem}{\rep@title}
\newcommand{\newreptheorem}[2]{%
\newenvironment{rep#1}[1]{%
 \def\rep@title{#2 \ref{##1}}%
 \begin{rep@theorem}}%
 {\end{rep@theorem}}}
\theoremstyle{remark}
\newtheorem{example}{Example}[]
\theoremstyle{plain}
\newtheorem{theorem}{Theorem}[section]
\newtheorem{lemma}[theorem]{Lemma}
\newtheorem{corollary}[theorem]{Corollary}
\theoremstyle{definition}
\newtheorem{definition}[theorem]{Definition}
\theoremstyle{remark}
\newtheorem{remark}[theorem]{Remark}
\newcommand{\norm}[1]{\left\lVert#1\right\rVert}
\newcommand{\abs}[1]{\left\lvert#1\right\rvert}
\DeclareMathOperator*{\argmax}{arg\,max}
\DeclareMathOperator*{\argmin}{arg\,min}
\pgfplotsset{compat=newest}
\pgfplotsset{
  every axis/.append style = {thick},
  tick style = {thick,black},
  %
  /tikz/normal shift/.code 2 args = {%
    \pgftransformshift{%
        \pgfpointscale{#2}{\pgfplotspointouternormalvectorofticklabelaxis{#1}}%
    }%
  },%
  shift/.style = {
    tick align        = outside,
    scaled ticks      = false,
    enlargelimits     = false,
    ticklabel shift   = {#1},
    axis lines*       = left,
    xtick style       = {normal shift={x}{#1}},
    ytick style       = {normal shift={y}{#1}},
    x axis line style = {normal shift={x}{#1}},
    y axis line style = {normal shift={y}{#1}},
  },
  shift/.default = 10pt,
  shift3d/.style = {
    shift=#1,
    ztick style       = {normal shift={z}{#1}},
    z axis line style = {normal shift={z}{#1}},
  },
  shift3d/.default = 10pt,
}
\newcolumntype{H}{>{\setbox0=\hbox\bgroup}c<{\egroup}@{}}
\title{How Does Bayes Error Limit Probabilistic Robust Accuracy}
\author{%
  \hfill Ruihan Zhang\hfill\hfill Jun Sun\hfill\hfill\\
  School of Computing and Information Systems\\
  Singapore Management University\\
  \texttt{\{rhzhang,junsun\}@smu.edu.sg} \\
}
\begin{document}

\maketitle

\begin{abstract}
Adversarial examples pose a security threat to many critical systems built on neural networks. Given that deterministic robustness often comes with significantly reduced accuracy, probabilistic robustness (\emph{i.e.}, the probability of having the same label with a vicinity is $\ge 1-\kappa$) has been proposed as a promising way of achieving robustness whilst maintaining accuracy. However, existing training methods for probabilistic robustness still experience non-trivial accuracy loss. It is unclear whether there is an upper bound on the accuracy when optimising towards probabilistic robustness, and whether there is a certain relationship between $\kappa$ and this bound. This work studies these problems from a Bayes error perspective. We find that while Bayes uncertainty does affect probabilistic robustness, its impact is smaller than that on deterministic robustness. This reduced Bayes uncertainty allows a higher upper bound on probabilistic robust accuracy than that on deterministic robust accuracy. Further, we prove that with optimal probabilistic robustness, each probabilistically robust input is also deterministically robust in a smaller vicinity. We also show that voting within the vicinity always improves probabilistic robust accuracy and the upper bound of probabilistic robust accuracy monotonically increases as $\kappa$ grows. Our empirical findings also align with our results.
\end{abstract}

\section{Introduction}
\label{sec:intro}

Neural networks (NNs) achieve remarkable success in various applications, including many security-critical systems~\cite{kurakin2018adversarial,sharif2016accessorize}. At the same time, several security vulnerabilities in NNs have been identified, including adversarial attacks that generate adversarial examples. Adversarial examples are inputs that are carefully crafted by adding human imperceptible perturbation to normal inputs to trigger wrong predictions~\cite{kurakin2016adversarial}. Their presence is particularly concerning in security-critical NN applications.

To defend against adversarial examples, various methods for improving a model's robustness have been proposed. Adversarial training works by training NNs with a mix of normal and adversarial examples, either pre-generated or generated during training. As it does not carry a formal guarantee on the achieved robustness~\cite{zhang2019limitations}, adversarially trained NNs are potentially vulnerable to new types of adversarial attacks~\cite{liu2019adaptiveface,tramer2020adaptive}. In contrast, certified training aims to provide a formal guarantee of robustness. A method in this category typically incorporates robustness verification techniques during training~\cite{xu2020automatic}, \emph{i.e.}, they aim to find a valuation of network parameters such that the model is provably robust to the training samples and some definition of vicinity. However, they are often impractical for multiple reasons including the irreducible errors due to Bayes error (the unavoidable inaccuracy in collecting or labelling training samples) which limits the level of accuracy that is achievable~\cite{chiang2020certified,zhang2024certified}.

Recent studies propose that probabilistic robustness, \emph{i.e.}, the probability of having adversarial examples within a vicinity is no more than a certain tolerance level $\kappa$, could be sufficient for many practical applications~\cite{robey2022probabilistically,li2022towards,zhang2023proa}. Furthermore, it is shown to be achievable with a smaller accuracy drop ($\sim$5\% when $\kappa = 0.1$) and less computational cost compared to certified training methods~\cite{robey2022probabilistically}. Probabilistic robustness thus balances between achieving high degrees of security and maintaining accuracy. However, it remains unclear if Bayes errors also limit the achievable performance when optimising towards probabilistic robustness \emph{i.e.}, probabilistic robust accuracy. Further, if an upper limit does exist, how is such a limit correlated to the tolerance level $\kappa$?

In this work, we aim to answer these questions. The Bayes error, in the context of statistics and machine learning, is a fundamental concept related to the inherent uncertainty in any classification system~\cite{ishida2022performance}. It represents the minimum error for any classifier on a given problem and is determined by the overlap in the probability distributions of different classes~\cite{fukunaga1990introduction}. We remark that the relevance of Bayes error in simple classification tasks may occasionally be questioned given that many datasets, such as MNIST, provide a single, definite label for each input~\cite{lecunmnist}. However, real-world data often lacks this clarity due to inevitable information loss, \emph{e.g.}, during image capture or compression. For instance, the CIFAR-10H dataset showcases that over a third of CIFAR-10 inputs can be re-annotated with uncertain labels by human annotators (CIFAR-10H)~\cite{peterson2019human}. Thus, this uncertainty leads to Bayes errors, which fundamentally constrain not only vanilla accuracy~\cite{ishida2022performance} but also deterministic robust accuracy~\cite{zhang2024certified} and probabilistic robust accuracy (as what will be shown in this work).


We study the limit on the probabilistic robust accuracy resulting from Bayes error. We first derive an optimal decision rule that maximises probabilistic robust accuracy. Similar to the Bayes classifier~\cite{fukunaga1975k}, the optimal decision rule for probabilistic robustness is also a Maximum A Posteriori (MAP~\cite{bassett2019maximum}) probability decision, except that the posterior is regarding the vicinity, not a single input. Then, we show that the error from this optimal decision rule regarding probabilistic robustness is lower bounded by the Bayes error for deterministic robustness, but within a much smaller vicinity. After that, a relationship is established between the upper bound of probabilistic robust accuracy and the upper bound of vanilla accuracy or deterministic robust accuracy. We further show that the bound monotonically increases as $\kappa$ grows. Empirically, we show that our bounds are consistent with what is observed on those probabilistically robust neural networks trained on various distributions.

\section{Preliminary and Problem Definition}
\label{sec:background}

This section first reviews the background of robustness in machine learning. Then, we recall the Bayes error for the deterministic robustness of classification. Finally, we define our research problem.

\subsection{Robustness in Neural Network Classification}

We put the context in a $K$-class classification problem where a classifier $h:\bm{x}\mapsto y$ learns to fit a joint distribution $D$ over input space $\mathbb{R}^n$ and label space $\set{0, 1, {\scriptstyle\ldots}, K-1}$. Let $h(\bm{x}) \in \set{0, 1, {\scriptstyle\ldots}, K-1}$ denote prediction, and an error captures the difference between $h(\bm{x})$ and $y$. That is, vanilla accuracy is $\Upsilon^+_\text{acc}(D, h) = \operatorname{E}_{(\mathbf{x}, \textnormal{y}) \sim D} \left[\mathbf{1}_{h(\mathbf{x})=\textnormal{y}}\right]$ and thus its error is $\Upsilon^-_\text{acc}(D, h) = 1 - \Upsilon^+_\text{acc}(D, h)$. The capital Upsilon with a plus sign denotes accuracy itself, while a minus sign denotes its corresponding error. 

Robustness $\Upsilon^+_\text{rob}(D, h,\mathbb{V})$ measures the change in prediction when a perturbation occurs on the input~\cite{szegedy2013intriguing}. If the prediction changes when an input is perturbed, then this input is an adversarial example. Formally, an input $\bm{x'}$ is an adversarial example of an input-label pair $(\bm{x}, y)$ if $\big(h(\bm{x})= y\big)\land \big(h(\bm{x'})\neq h(\bm{x})\big)\land \big(\bm{x'}\in\mathbb{V}({\bm{x}})\big)$~\cite{goodfellow2014explaining,kurakin2016adversarial}, where $\mathbb{V}(\bm{x})$ is the vicinity at $\bm{x}$. We define robustness to be the probability of \emph{not} observing an adversarial example~\cite{lin2019robustness}, as defined in \cref{eq:robustness}.
\begin{equation}
\label{eq:robustness}
    \Upsilon^+_\text{rob}(D, h, \mathbb{V}) = P_{(\mathbf{x}, \mathbf{y})\sim\,D} \Big(\big(h(\mathbf{x}) = \textnormal{y}) \big) \land \forall\bm{x'}\in\mathbb{V}(\mathbf{x}).~ h(\bm{x'}) =  h(\mathbf{x}) \Big)
\end{equation}

\begin{remark}[Vicinity]
    $\bm{x}$-vicinity also has an equivalent distribution notation $\mathcal{V}(\bm{x})$. Let $(v: \mathbb{R}^n\to\mathbb{R})$ denote its probability density function (PDF) such that $v(\bm{x'}-\bm{x})>0$ means $\bm{x'}$ is in $\bm{x}$-vicinity.
    $v$ is an even and quasiconcave function \emph{e.g.}, the Gaussian function. \cref{sec:vicinity} shows the equivalence.

\end{remark}


\vspace{-0.05mm}
\textbf{Deterministic Robustness}~
Deterministic robustness requires a zero probability of adversarial examples occurring in a vicinity. It is difficult as achieving $\forall\bm{x'}\in\mathbb{V}(\mathbf{x}), \, h(\bm{x'})=h(\mathbf{x})$ is challenging. Although adversarial training~\cite{goodfellow2014explaining} empirically reduces adversarial examples~\cite{ganin2016domain}, it lacks a formal guarantee. Meanwhile, certified training~\cite{muller2022certified} guarantees deterministic robustness through integrating NN verification during training, often resulting in a significant accuracy drop (35\%) ~\cite{li2023sok}.

\vspace{-0.05mm}

\textbf{Probabilistic Robustness}~ While deterministic robustness is often infeasible without seriously compromising accuracy, probabilistic robustness claims to balance robustness and accuracy~\cite{robey2022probabilistically}. Probabilistic robustness is defined as in \cref{eq:probrob}, where a tolerance level $\kappa$ limits the probability of having adversarial examples in a vicinity $\mathcal{V}$. Here, a small portion (such as 1\%~\cite{zhang2023proa}-50\%~\cite{cohen2019certified}) of adversarial examples within the vicinity is considered acceptable. Probabilistic robustness is often sufficient in 
practice~\cite{robey2022probabilistically}. Indeed, safety certification of many safety-critical domains such as aviation requires 
keeping safety violation probabilities below a non-zero threshold~\cite{guerin2021certifying}.

\begin{equation}
\label{eq:probrob}
    \Upsilon^+_\text{prob}(D, h, \mathcal{V}, \kappa) = P_{(\mathbf{x}, \mathbf{y})\sim\,D} \Big(\big(h(\mathbf{x}) = \textnormal{y}\big) \land \left(\big(P_{\mathbf{x'}\sim\,\mathcal{V}(\mathbf{x})} (h(\mathbf{x'}) \neq h(\mathbf{x})\big) \le \kappa\right) \Big)
\end{equation}


\subsection{An Upper Bound of Deterministic Robustness from Bayes Error}

\paragraph{The Bayes Error}
In the presence of uncertainty in data distribution, a classifier (no matter how it is trained) inevitably makes some wrong predictions. Bayes error quantifies this inherent uncertainty and represents the irreducible error in accuracy~\cite{fukunaga1990introduction,garber1988bounds,ripley1996pattern}, formally captured in \cref{eq:bayeserror1}.
\begin{equation}
\label{eq:bayeserror1}
\begin{aligned}
    \min_{h\in\set{\mathbb{R}^n\to\set{0,1,{\scriptstyle\ldots},K-1}} }\Upsilon^-_\text{acc}(D, h)=\operatorname{E}_{(\mathbf{x}, \textnormal{y}) \sim D}\left[1 - \max_k p(\textnormal{y}=k|\mathbf{x})\right]
\end{aligned}
\end{equation}
A classifier achieves the Bayes error when its predictions correspond to the class with maximal posterior probability. Such a classifier is known as a Bayes classifier. The posterior of other classes thus contributes to the irreducible error. An example illustrating the Bayes error is shown in \cref{fig:runex1}.

\paragraph{Bayes Error for Deterministic Robustness}

Prior work~\cite{zhang2024certified} shows that optimising towards deterministic robustness makes the Bayes error worse. Besides the posterior of other classes, forcing a prediction to be consistent with its neighbours constitutes another source of Bayes error~\cite{zhang2024certified}. As in \cref{eq:ub_compute_2}, the Bayes error for deterministic robustness can be derived from the Bayes error of a convolved distribution $D'=D*v$. In $D'$, $p(\mathbf{x},\mathbf{y})$ is convolved from vicinity $v$ and $p(\mathbf{x},\mathbf{y})$ in $D$.
\begin{equation}
\label{eq:ub_compute_2}
    \min_{h\in\set{\mathbb{R}^n\to\set{0,1,{\scriptstyle\ldots},K-1}} } \Upsilon^-_\text{rob}(D, h, \mathbb{V}) =\operatorname{E}_{(\mathbf{x}, \textnormal{y}) \sim D'}\left[1 - \max_k p(\textnormal{y}=k|\mathbf{x})\mathbf{1}_{\mathbf{x}\notin\mathbb{K}_{D^\dagger}}\right]
\end{equation}
where $\Upsilon^-_\text{rob}(D, h, \mathbb{V})=1-\Upsilon^+_\text{rob}(D, h, \mathbb{V})$. $D^\dagger = \lceil D'\rceil*v$ where $\lceil D'\rceil$ is the ``hardened'' distribution of $D'$, \emph{i.e.}, one-hot of Argmax posterior. $\mathbb{K}_{D^\dagger} = \set{\bm{x}|(\mathbf{x},\textnormal{y}){\scriptstyle\sim} D^\dag, \max_k p(k|\mathbf{x}=\bm{x})<1}$ represents a domain near the boundary where the marginal probability rather than joint probability contributes to the Bayes error for deterministic robustness. Therefore, the Bayes error for deterministic robustness of $D$ is the Bayes error of $D'$ plus the joint probability of non-max classes in $\mathbb{K}_{D^\dagger}$. As shown in~\cite{zhang2024certified}, deterministic robustness $\Upsilon^+_\text{rob}(D, h, \mathbb{V})$ has an upper bound of 1 minus this irreducible error. \cref{fig:runex1_2} illustrates the Bayes error for deterministic robustness.

\subsection{Problem Definition}

The fundamental problem in this study is finding an upper bound of probabilistic robust accuracy. Further, we aim to establish a relation between this upper bound and the tolerance level, \emph{i.e.}, $\kappa$. Formally, we solve the minimisation problem in \cref{eq:problem} where $\Upsilon^-_\text{prob}(D, h, \mathcal{V}, \kappa) = 1 - \Upsilon^+_\text{prob}(D, h, \mathcal{V}, \kappa)$.
\begin{equation}
\label{eq:problem}
    \min_{h\in\set{\mathbb{R}^n\to\set{0,1,{\scriptstyle\ldots},K-1}} } \Upsilon^-_\text{prob}(D, h, \mathcal{V}, \kappa)
\end{equation}

\begin{figure}[t]
    \centering
    \begin{subfigure}[t]{0.26\linewidth}
        \includegraphics[width=\linewidth]{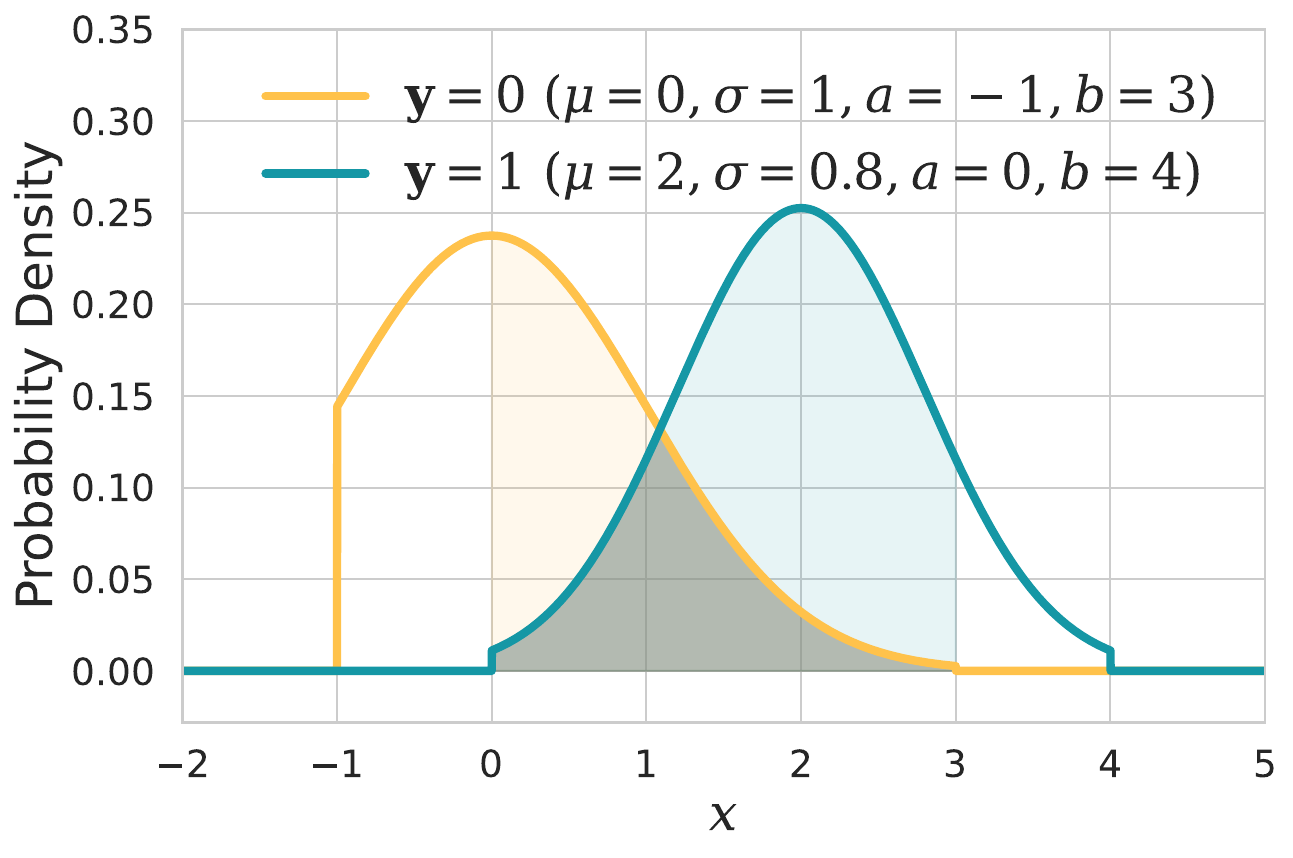}
        \caption{Bayes error
        }
        \label{fig:runex1}
    \end{subfigure}
    \hfill
    \begin{subfigure}[t]{0.26\linewidth}
        \includegraphics[width=\linewidth]{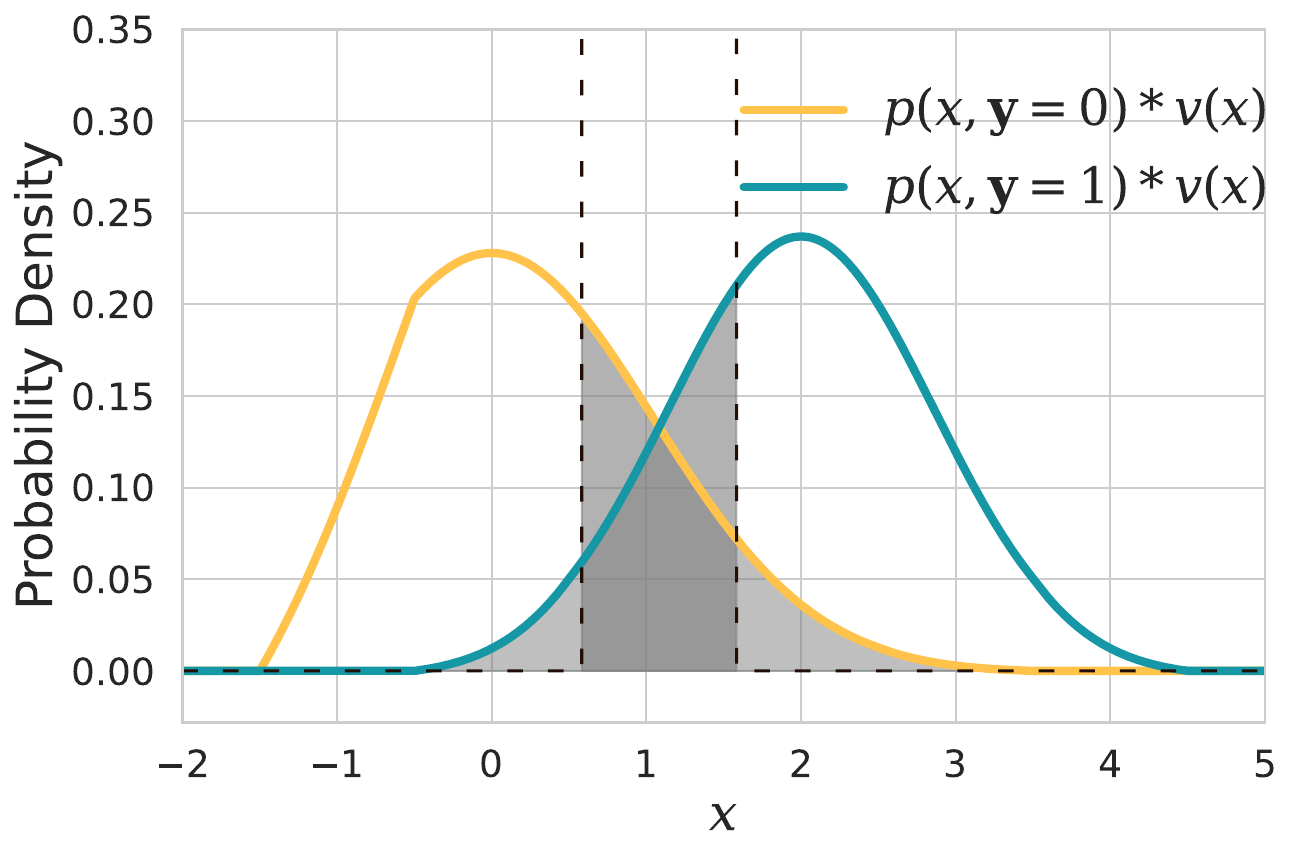}
        \caption{$\epsilon=0.5$
        }
        \label{fig:runex1_2}
    \end{subfigure}
    \hfill
    \begin{subfigure}[t]{0.26\linewidth}
        \includegraphics[width=\linewidth]{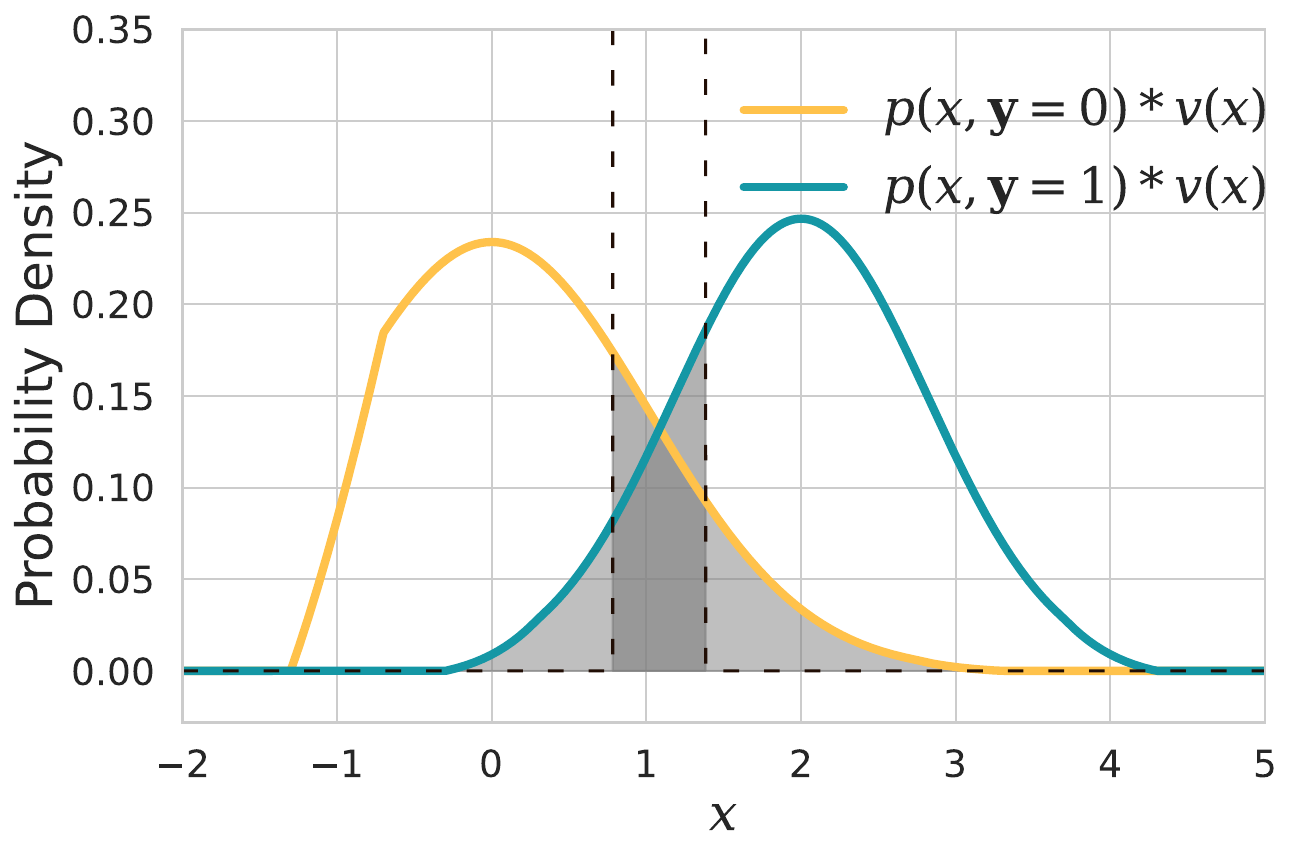}
        \caption{$\epsilon=0.5, \kappa = 0.2$
        }
        \label{fig:runex1_3}
    \end{subfigure}
    \hfill
    \begin{subfigure}[t]{0.190\linewidth}
        \includegraphics[width=\linewidth, trim=0.cm 0.cm 1.1cm .45cm, clip]{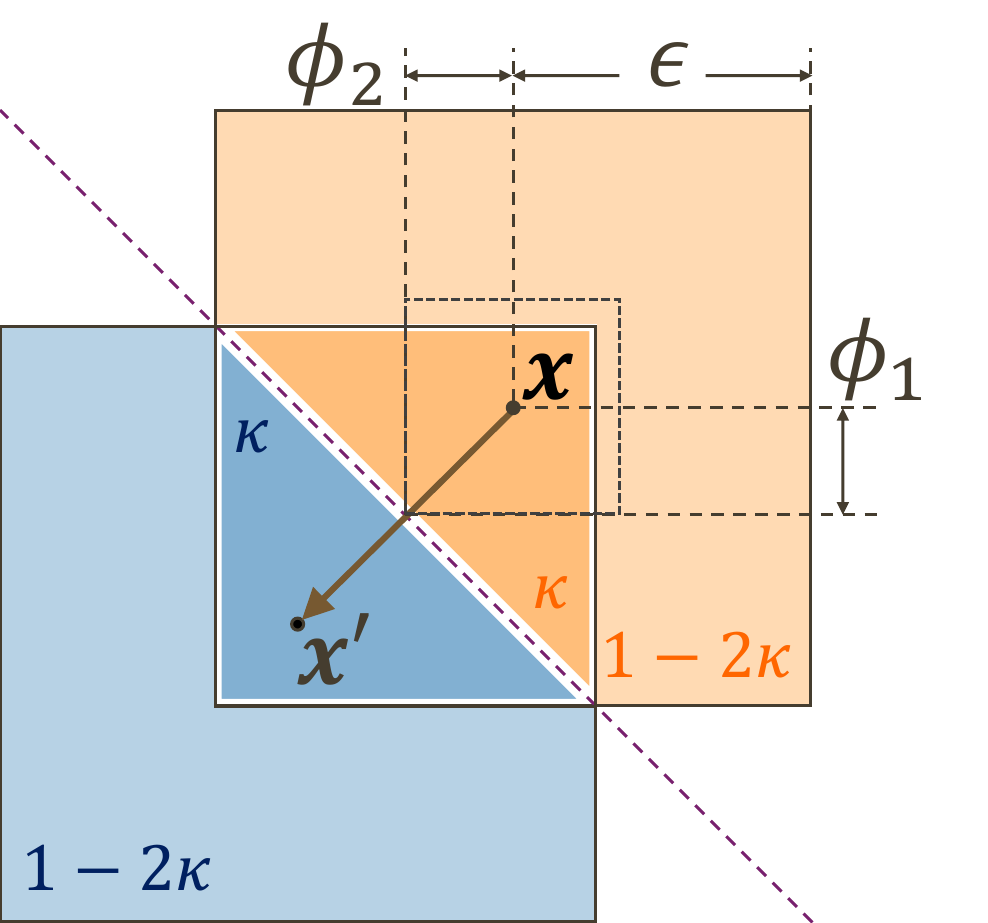}
        \caption{$L^\infty$ 2D example}
        \label{fig:2d_square_example}
    \end{subfigure}
    \caption{ Two truncated normal distributions are used to visualise the Bayes error of (a) vanilla accuracy, (b) deterministic robust accuracy and (c) probabilistic robust accuracy. (d) Example of \cref{cor:linf}. The nearest adversarial example of $\bm{x}$ is at the midpoint of $\bm{x}$ and $\bm{x'}$. Both $\bm{x}$ and $\bm{x'}$ are probabilistically robust but $h(\bm{x})\neq h(\bm{x'})$. The dashed box with side length $2\phi_i$ representes $\mathbb{V}^{\downarrow\kappa}(\bm{x})$.}
    \label{fig:pre}
\end{figure}

\section{Method}
\label{sec:method}

In the following, we study the upper bound of probabilistic robust accuracy. We first model the error when optimising towards probabilistic robustness and derive an optimal decision rule. Then, we study the Bayes error obtained from this rule. Further, we formally establish the relationship between the upper bounds of vanilla accuracy, probabilistic robust accuracy, and deterministic robust accuracy.


\subsection{Error Modelling and Optimal Decision Rule for Probabilistic Robustness}
\label{sec:modelling}

To find Bayes error when optimising towards probabilistic robustness given distribution $(\mathbf{x},\textnormal{y}){\scriptstyle\sim} D$, we first model $\Upsilon^-_{\textsc{p}\text{rob}}(D, h, \mathcal{V}, \kappa)$. Intuitively, an error happens if the prediction is wrong or many of the samples in the vicinity are predicted wrongly. We denote the error from the former case as incorrectness and the latter as inconsistency. We analyze each type of error and their combined effect.


\paragraph{Incorrectness}
The incorrectness for any example $(\bm{x}, y)$ is simply $\mathbf{1}_{h(\bm{x}) \neq y}$. The incorrectness of a prediction at an input $\bm{x}$ given all possible labels $y \in \set{0, 1, {\scriptstyle\ldots}, K-1}$ considers posterior at $\bm{x}$, as in \cref{eq:cor}. Incorrectness is minimum when $h(\bm{x})$ equals the class with the highest posterior.
\begin{equation}
\label{eq:cor}
    e_\text{cor}(\bm{x}, h; P(\textnormal{y}\mid\bm{x}))
    = \sum_{y=0}^{K-1} P(\textnormal{y}=y\mid \mathbf{x}=\bm{x}) \,\mathbf{1}_{h(\bm{x}) \neq y} = 1-\sum_{y=0}^{K-1} P(\textnormal{y}=y\mid \mathbf{x}=\bm{x}) \,\mathbf{1}_{h(\bm{x}) = y}
\end{equation}

\paragraph{Inconsistency}
Inconsistency results from prediction at $\bm{x}$ being not the same as some of its neighbours. Let $P_{\mathbf{t}\sim\,\mathcal{V}(\bm{x})}\big(h(\mathbf{t}) \neq  h(\bm{x})\big)$ denote the probability of a neighbour of $\bm{x}$ having a prediction different from $\bm{x}$. Since this probability is parameterised by $h(\bm{x})\in\set{0, 1, {\scriptstyle\ldots}, K-1}$, it can be reformulated as $\sum_{k=0}^{K-1} \mathbf{1}_{h(\bm{x}) = k} P_{\mathbf{t}\sim\,\mathcal{V}(\bm{x})}\big(h(\mathbf{t}) \neq k\big) $. 
Let $\mu_k(\bm{x}) = P_{\mathbf{t}\sim\,\mathcal{V}(\bm{x})}\big(h(\mathbf{t}) = k\big)$ and $\sum _{k=0}^{K-1}\mu_{k}(\bm{x})=1$. Intuitively, $\mu_k$ is the probability of a neighbour predicted as class-$k$. Thus, the probability of a neighbour
of $\bm{x}$ having a different prediction from $\bm{x}$ can be written as \cref{eq:cns_inner}.
\begin{equation}
\label{eq:cns_inner}
    P_{\mathbf{t}\sim\,\mathcal{V}(\bm{x})}\big(h(\mathbf{t}) \neq  h(\bm{x})\big) = \sum_{k=0}^{K-1} \mathbf{1}_{h(\bm{x}) = k} \big(1-\mu_k(\bm{x})\big) =1-\sum_{k=0}^{K-1} \mu_k(\bm{x})\,\mathbf{1}_{h(\bm{x}) = k}
\end{equation}

Inconsistency exists when $P_{\mathbf{t}\sim\,\mathcal{V}(\bm{x})}\big(h(\mathbf{t}) \neq  h(\bm{x})\big)>\kappa$. This thresholding can be represented by a unit step function ($u$) that takes an input $P_{\mathbf{t}\sim\,\mathcal{V}(\bm{x})}\big(h(\mathbf{t}) \neq h(\bm{x})\big) - \kappa$. Thus, inconsistency at $\bm{x}$ is expressed as \cref{eq:cns_simp}. Also, \cref{le:half} suggests that $\kappa$ takes value from $[0, 1/2)$.
\begin{equation}
\label{eq:cns_simp}
    e_\text{cns}(\bm{x}, h; \mathcal{V}, \kappa) 
    = u\left(P_{\mathbf{t}\sim\,\mathcal{V}(\bm{x})}\Big(h(\mathbf{t}) \neq  h(\bm{x})\Big) -\kappa\right) = u\left(1- \kappa - \sum_{k=0}^{K-1} \mu_k(\bm{x})\,\mathbf{1}_{h(\bm{x}) = k} \right)
\end{equation}

\begin{lemma}
\label{le:half}
For the prediction of input $\bm{x}$ to be consistent, at most one class has a prediction probability $\ge 1-\kappa$ in $\bm{x}$-vicinity. Thus, $\kappa < \frac{1}{2}$.
(Proof is provided in \cref{app:half}.)
\end{lemma}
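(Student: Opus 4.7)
The proof is a short two-step unpacking of the definitions in \cref{eq:cns_simp}. First, I would translate ``consistency'' into a condition on $\mu_k(\bm{x})$: observe that $e_\text{cns}(\bm{x}, h; \mathcal{V}, \kappa) = 0$ is equivalent to $P_{\mathbf{t}\sim\mathcal{V}(\bm{x})}(h(\mathbf{t}) \neq h(\bm{x})) \le \kappa$, which, since $P_{\mathbf{t}\sim\mathcal{V}(\bm{x})}(h(\mathbf{t}) = h(\bm{x})) = \mu_{h(\bm{x})}(\bm{x})$, rearranges to $\mu_{h(\bm{x})}(\bm{x}) \ge 1 - \kappa$. So a class $k$ can serve as a consistent prediction at $\bm{x}$ if and only if $\mu_k(\bm{x}) \ge 1 - \kappa$.

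Second, I would rule out the possibility of two such classes simultaneously by a pigeonhole/normalisation argument. Suppose for contradiction that two distinct classes $k_1 \neq k_2$ both satisfy $\mu_{k_1}(\bm{x}) \ge 1-\kappa$ and $\mu_{k_2}(\bm{x}) \ge 1-\kappa$. Since $\sum_{k=0}^{K-1}\mu_k(\bm{x}) = 1$ with all $\mu_k(\bm{x}) \ge 0$, I have $\mu_{k_1}(\bm{x}) + \mu_{k_2}(\bm{x}) \le 1$. Combining this with $\mu_{k_1}(\bm{x}) + \mu_{k_2}(\bm{x}) \ge 2(1-\kappa)$ forces $2(1-\kappa) \le 1$, i.e., $\kappa \ge \tfrac{1}{2}$. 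Contrapositively, whenever $\kappa < \tfrac{1}{2}$, at most one class can have vicinity-probability $\ge 1-\kappa$, proving the first sentence of the lemma. For the ``thus'' clause, the argument shows that once $\kappa \ge \tfrac{1}{2}$ the consistent class need not be uniquely defined (two or more classes could simultaneously pass the consistency threshold, making the probabilistic-robustness condition ambiguous), so we restrict the tolerance to $\kappa \in [0, \tfrac{1}{2})$ for the remainder of the paper.

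\textbf{Main obstacle.} There is no real analytical difficulty here; the only risk is a notational one, namely correctly identifying $1 - \mu_{h(\bm{x})}(\bm{x})$ with $P_{\mathbf{t}\sim\mathcal{V}(\bm{x})}(h(\mathbf{t}) \neq h(\bm{x}))$ using $\sum_k \mu_k(\bm{x}) = 1$, which is immediate from the definition of $\mu_k$. Beyond that, the proof is a one-line application of normalisation.
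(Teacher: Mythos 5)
Your proposal is correct and uses essentially the same argument as the paper's proof in \cref{app:half}: both rest on the normalisation $\sum_k \mu_k(\bm{x}) = 1$ and the observation that two distinct classes cannot simultaneously have vicinity-probability $\ge 1-\kappa$ unless $\kappa \ge \tfrac{1}{2}$. If anything, your contrapositive phrasing makes the pigeonhole step slightly more explicit than the paper's contradiction-based wording.
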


\paragraph{The overall error across the distribution}

Considering probabilistic robustness, the error at input $\bm{x}$ is a combined error of $e_\text{cor}$ and $e_\text{cns}$ at $\bm{x}$. 
We need two intuitions to derive the combined error. First, if $e_\text{cns}(\bm{x}, h; \mathcal{V}, \kappa)=1$, the combined error is always 1. Second, if $e_\text{cns}(\bm{x}, h; \mathcal{V}, \kappa)=0$, the combined error equals $e_\text{cor}(\bm{x}, h; P(\textnormal{y}|\bm{x}))$. Note that $e_\text{cor}$ takes value from $[0,1]$ and $e_\text{cns}$ takes binary value from $\set{0,1}$. The combined error $e$ is expressed as \cref{eq:combine} whose derivation is in \cref{app:combine}.
\begin{equation}
\label{eq:combine}
\begin{aligned}
    e(\bm{x}, h; P(\textnormal{y}&|\bm{x}), \mathcal{V}, \kappa) = (1 - e_\text{cns}(\bm{x}, h;\mathcal{V},\kappa)) e_\text{cor}(\bm{x}, h; P(\textnormal{y}\mid\bm{x})) + e_\text{cns}(\bm{x}, h;\mathcal{V},\kappa) \\
    &= 1-u\left(\kappa - 1 + \sum_{k=0}^{K-1} \mu_k(\bm{x})\,\mathbf{1}_{h(\bm{x}) = k} \right)\left(\sum_{y=0}^{K-1} P(\textnormal{y}=y\mid \mathbf{x}=\bm{x}) \,\mathbf{1}_{h(\bm{x}) = y} \right)
\end{aligned}
\end{equation}
Note that in general, the errors are functions of $\bm{x}, h, D,\mathcal{V}$, and $\kappa$. For simplicity, when $h$, $D$, $\mathcal{V}$, or $\kappa$ can 
be inferred from the context, we simply omit them, \emph{e.g.}, the simplest case is written as $e_\text{cor}(\bm{x})$ to denote the incorrectness, $e_\text{cns}(\bm{x})$ to denote inconsistency, and $e(\bm{x})$ to denote the combined error.


To model the distribution-wise error of classifier $h$ on $(\mathbf{x}, \textnormal{y}){\scriptstyle\sim}D$, we compute the expectation of $e(\bm{x})$ across $D$. Formally, $\Upsilon^-_\text{prob}(D, h, \mathcal{V}, \kappa) = \int_{\bm{x} \in \mathbb{R}^n} e(\bm{x}) p(\mathbf{x} = \bm{x}) d\bm{x}$, where $p(\mathbf{x})$ is the marginal probability in $D$. Hereby, we get $\Upsilon^-_\text{prob}$, the error when optimising towards probabilistic robustness.

To minimise $\Upsilon^-_\text{prob}$ of any measurable classification function $h$, we explore the optimal decision rules for probabilistic robustness. From \cref{eq:combine}, we can establish a Maximum A Posteriori optimal decision rule, whose formal statement is given in \cref{thm:align}.

\begin{theorem}
\label{thm:align}
    If $h^*$ is optimal for the probabilistic robustness on a given distribution, \emph{i.e.}, $h^* = \argmin_h \int_{\bm{x} \in \mathbb{R}^n} e(\bm{x})p(\mathbf{x} = \bm{x}) d\bm{x}$, we would always have
    $\forall \bm{x} \in \mathbb{R}^n, h^*(\bm{x}) = \argmax_k \mu_k(\bm{x})$.
\end{theorem}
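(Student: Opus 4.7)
The plan is to reduce the minimisation of $\Upsilon^-_\text{prob}$ to a pointwise-in-$\bm{x}$ analysis. First I would rewrite the combined error from \cref{eq:combine} in the collapsed form
\begin{equation*}
    e(\bm{x}, h) \;=\; 1 \;-\; u\!\left(\kappa - 1 + \mu_{h(\bm{x})}(\bm{x})\right)\, P\!\left(\textnormal{y} = h(\bm{x}) \,\middle|\, \mathbf{x} = \bm{x}\right),
\end{equation*}
using the fact that each indicator $\mathbf{1}_{h(\bm{x})=k}$ in \cref{eq:combine} selects exactly the summand $k = h(\bm{x})$. This form exposes that, for a \emph{fixed} family of vicinity probabilities $\mu_k$, the contribution of a single point $\bm{x}$ to the integral $\int e(\bm{x})p(\bm{x})d\bm{x}$ depends on $h$ only through the value $h(\bm{x})$ and takes one of two possible values: either $1 - P(\textnormal{y} = h(\bm{x}) \mid \bm{x})$ when $\mu_{h(\bm{x})}(\bm{x}) \ge 1 - \kappa$, or $1$ otherwise.

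Next I would invoke \cref{le:half}: since $\kappa < 1/2$, at most one class $k^*$ can satisfy $\mu_{k^*}(\bm{x}) \ge 1 - \kappa$, and such a $k^*$ is necessarily $\argmax_k \mu_k(\bm{x})$, because any other $\mu_j \le 1 - \mu_{k^*} \le \kappa < 1/2 \le \mu_{k^*}$. I would then do a case split at each $\bm{x}$. If some class $k^*$ with $\mu_{k^*}(\bm{x}) \ge 1 - \kappa$ exists, then setting $h^*(\bm{x}) = k^*$ yields per-point error $1 - P(\textnormal{y} = k^* \mid \bm{x}) \le 1$, which is weakly better than the error $1$ attained by every other choice; optimality therefore forces $h^*(\bm{x}) = k^* = \argmax_k \mu_k(\bm{x})$ (strictly so whenever $P(\textnormal{y}=k^*\mid\bm{x}) > 0$). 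If no such $k^*$ exists, every choice yields error $1$, and we may fix $h^*(\bm{x}) = \argmax_k \mu_k(\bm{x})$ without loss of generality, consistent with the statement.

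The main obstacle is the tacit circularity that $\mu_k$ depends on $h$ itself, so the minimisation is not literally separable in $\bm{x}$. I would neutralise this as follows: take any optimal $h^*$ and let $\mu_k^*$ denote the induced vicinity probabilities. Because the vicinity density $v$ is absolutely continuous, altering $h^*$ on any Lebesgue-null set (in particular at a single point) leaves every $\mu_k^*(\bm{x}')$ unchanged. Hence a necessary condition for $h^*$ to be optimal is that, for almost every $\bm{x}$, the value $h^*(\bm{x})$ minimises $e(\bm{x}, \cdot)$ against the frozen $\mu_k^*$. Combined with the previous case analysis, this yields $h^*(\bm{x}) = \argmax_k \mu_k^*(\bm{x})$ almost everywhere, which is precisely the claim. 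The tie-breaking in the degenerate case (no class exceeds $1-\kappa$) is the only non-mechanical step and is handled by the WLOG assignment above.
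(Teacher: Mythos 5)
Your proof is correct and follows essentially the same route as the paper's: collapse the combined error so that only the term for the predicted class survives, invoke \cref{le:half} to conclude that at most one class can clear the $1-\kappa$ threshold and that it must be $\argmax_k \mu_k(\bm{x})$, and observe that every other choice forces the unit step to zero and hence per-point error $1$, so the argmax choice is weakly better pointwise. The only difference is that you make explicit two points the paper leaves implicit --- the degenerate case where no class reaches $1-\kappa$ (where the claim holds only up to a WLOG tie-break) and the circularity that $\mu_k$ is itself induced by $h$, which you neutralise by freezing $\mu_k^*$ and noting that null-set modifications leave it unchanged --- and these are refinements of, not departures from, the paper's argument.
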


\begin{proof}
    Let $h_1$ and $h_2$ be two distinct classification functions such that $h_1(\bm{x}) = \argmax_k \mu_k(\bm{x})$ and $h_2(\bm{x}) \neq h_1(\bm{x})$. If we can prove $e(\bm{x}, h_1) \le e(\bm{x}, h_2)$, then we can know $h_1$ must be optimal for probabilistic robustness. First, we denote $k_1 = h_1(\bm{x})$ and $ k_2 = h_2(\bm{x})\neq k_1$. Then,
    \begin{equation}
        e(\bm{x}, h_1) - e(\bm{x}, h_2)
        =u\big(\kappa - 1 + \mu_{k_2}(\bm{x}) \big)P(\textnormal{y}=k_2| \mathbf{x}=\bm{x})  - u\left(\kappa - 1 + \mu_{k_1}(\bm{x}) \right)P(\textnormal{y}=k_1| \mathbf{x}=\bm{x}).
    \end{equation}
    Since $\mu_{k_2}(\bm{x}) \le \mu_{k_1}(\bm{x})$, we get $\mu_{k_2}(\bm{x}) \le 1/2$. Recall $\kappa < 1/2$ from \cref{le:half}, we get $\kappa - 1 + \mu_{k_2}(\bm{x}) < 1/2 - 1 + 1/2 = 0$. Consequently, we have $u\left(\kappa - 1 + \mu_{k_2}(\bm{x}) \right) = 0$. Therefore, $e(\bm{x}, h_1) - e(\bm{x}, h_2) =  - u\left(\kappa - 1 + \mu_{k_1}(\bm{x}) \right)P(\textnormal{y}=k_1| \mathbf{x}=\bm{x}) \le 0$. This inequality applies to any input $\bm{x}$. Hence, a classification function like $h_1$ is optimal.
    An extended proof is in \cref{app:align}.
\end{proof}
Intuitively, the theorem states that when optimising towards probabilistic robustness, a Bayes (optimal) classifier would always classify a sample with the most popular label in the vicinity.

\subsection{Bayes Error for Probabilistic Robustness from Bayes Error for Deterministic Robustness}
\label{sec:bound}

The Bayes classifier, regarding probabilistic robustness, is closely related to the most popular label in its vicinity, leading us to study the properties of $\mu_k$.
Intuitively, $\mu_k$ is the probability of a neighbour predicted as class-$k$. Formally, $\mu_k$ has an equivalent convolutional form as
\begin{equation}
\label{eq:mu_conv}
    \mu_k(\bm{x}) = P_{\mathbf{t}\sim\,\mathcal{V}(\bm{x})}\big(h(\mathbf{t}) = k\big) =\int_{\bm{t}\in\mathbb{R}^n}\mathbf{1}_{h(\bm{t})=k} ~v(\bm{x} - \bm{t})\,d\bm{t} = (\mathbf{1}_{h(\cdot)=k} * v)(\bm{x}),
\end{equation}
where $*$ denotes convolution and $\mathbf{1}_{h(\cdot)=k}$ denotes an indicator function returning 1 if $h$ of input equals $k$. $v$ is the probability density function of vicinity distribution, \emph{e.g.}, uniform distribution.

Intuitively, convolution acts as a smoothing operation. Thus, $\mu_k(\bm{x})$ is expected to change \emph{gradually} as $\bm{x}$ moves in $\mathbb{R}^n$. Similarly, $\argmax_k \mu_k(\bm{x})$ is unlikely to switch frequently. This implies that under the optimal probabilistic robustness condition, predictions do not change randomly or frequently  (in $\mathbb{R}^n$) but exhibit a form of continuity. \cref{lemma:bound_change} and \cref{thm:continuous} formally states this intuition. Specifically, \cref{lemma:bound_change} states that $\mu_k(\bm{x})$ changes \emph{gradually} as $\bm{x}$ moves in $\mathbb{R}^n$. Moreover,  \cref{thm:continuous} states that
the Bayes classifier (for probabilistic robustness) achieves deterministic robustness with a much smaller vicinity at any input that achieves probabilistic robustness.

\begin{lemma}
\label{lemma:bound_change}
    The change in $\mu_k$ resulting from shifting an input by a certain distance $\phi$ within the vicinity is bounded in any direction $\bm{\hat{\phi}}$. Formally: where $\mathbb{S}^{n-1}$ is the set of all unit vectors in $\mathbb{R}^n$,
    \begin{equation}
    \label[ineq]{ineq:mu_change}
    \begin{aligned}
        \forall\bm{x} \in \mathbb{R}^n,\forall \phi\in\mathbb{R},        \Bigg(\left(\forall \bm{\hat{\phi}}\in\mathbb{S}^{n-1},~ v\left(\frac{\phi}{2}\bm{\hat{\phi}}\right)>0\right) \to\\
        \forall \bm{\hat{\phi}}\in\mathbb{S}^{n-1},~ \left( \abs{\mu_k(\bm{x} + \phi\bm{\hat{\phi}}) - \mu_k(\bm{x})} \le  1 - \min_{\bm{\hat{\phi'}}\in\mathbb{S}^{n-1}}\int_{\bm{t}\in\mathbb{R}^n}\min \left( v(\bm{t}  -\phi\bm{\hat{\phi'}}),  v(\bm{t})\right) d\bm{t}\right)\Bigg).
    \end{aligned}
    \end{equation}
\end{lemma}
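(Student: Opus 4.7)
The plan is to exploit the convolutional identity $\mu_k = \mathbf{1}_{h(\cdot)=k} * v$ from \cref{eq:mu_conv}, reduce the difference $\mu_k(\bm{x} + \phi\bm{\hat{\phi}}) - \mu_k(\bm{x})$ to an integral against the \emph{difference kernel} $v(\cdot - \phi\bm{\hat{\phi}}) - v(\cdot)$, and then bound this via the overlap integral of $v$ with its translate. The bound on the minimum over $\bm{\hat{\phi}'}$ will be obtained at the end as a straightforward consequence of $\int\min \ge \min_{\bm{\hat{\phi}'}}\int\min$.

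First, I would write
\begin{equation*}
\mu_k(\bm{x}+\phi\bm{\hat{\phi}}) - \mu_k(\bm{x}) = \int_{\bm{t}\in\mathbb{R}^n} \mathbf{1}_{h(\bm{t})=k}\bigl[v(\bm{x}+\phi\bm{\hat{\phi}}-\bm{t}) - v(\bm{x}-\bm{t})\bigr] d\bm{t},
\end{equation*}
substitute $\bm{s} = \bm{t}-\bm{x}$, and use evenness of $v$ to rewrite the bracket as $v(\bm{s}-\phi\bm{\hat{\phi}}) - v(\bm{s})$. Next, decompose $v(\bm{s}-\phi\bm{\hat{\phi}}) - v(\bm{s}) = A(\bm{s}) - B(\bm{s})$ into its positive and negative parts $A = \bigl(v(\bm{s}-\phi\bm{\hat{\phi}}) - v(\bm{s})\bigr)_+$ and $B = \bigl(v(\bm{s}) - v(\bm{s}-\phi\bm{\hat{\phi}})\bigr)_+$. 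The key identity, which follows from $\int v = \int v(\cdot-\phi\bm{\hat{\phi}}) = 1$, is
\begin{equation*}
\int A(\bm{s})\,d\bm{s} = \int B(\bm{s})\,d\bm{s} = 1 - \int_{\bm{s}\in\mathbb{R}^n} \min\bigl(v(\bm{s}-\phi\bm{\hat{\phi}}),\, v(\bm{s})\bigr)\,d\bm{s}.
\end{equation*}

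Third, since $\mathbf{1}_{h(\bm{x}+\bm{s})=k} \in \{0,1\}$, each of $\int \mathbf{1}_{h(\bm{x}+\bm{s})=k} A\,d\bm{s}$ and $\int \mathbf{1}_{h(\bm{x}+\bm{s})=k} B\,d\bm{s}$ lies in the interval $\bigl[0,\ 1 - \int\min(v(\cdot-\phi\bm{\hat{\phi}}),v)\bigr]$, so their difference (which equals $\mu_k(\bm{x}+\phi\bm{\hat{\phi}}) - \mu_k(\bm{x})$) lies in the symmetric interval around $0$ with the same radius, yielding
\begin{equation*}
\bigl|\mu_k(\bm{x}+\phi\bm{\hat{\phi}}) - \mu_k(\bm{x})\bigr| \le 1 - \int_{\bm{t}\in\mathbb{R}^n} \min\bigl(v(\bm{t}-\phi\bm{\hat{\phi}}),\, v(\bm{t})\bigr)\,d\bm{t}.
\end{equation*}
Taking the infimum over directions on the right side only weakens the bound, giving exactly \cref{ineq:mu_change}. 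The precondition $v(\tfrac{\phi}{2}\bm{\hat{\phi}}) > 0$ for all $\bm{\hat{\phi}} \in \mathbb{S}^{n-1}$ is what guarantees (via evenness and quasiconcavity of $v$, since then $\tfrac{\phi}{2}\bm{\hat{\phi}}$ lies in the interior of both the support of $v$ and of $v(\cdot - \phi\bm{\hat{\phi}})$) that the overlap integral is strictly positive in every direction, so the bound is non-trivial.

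The main obstacle, and the only place where care is needed, is avoiding a factor-of-two loss in the bound. A naive triangle inequality through $\int|v(\bm{s}-\phi\bm{\hat{\phi}}) - v(\bm{s})|\,d\bm{s}$ would give $2\bigl(1 - \int\min\bigr)$, which is too weak. The resolution is the positive/negative-part split combined with the observation that $\mathbf{1}_{h(\cdot)=k}$ is bounded by $1$ rather than just $\pm 1$, so the two parts subtract rather than add in the worst case. Once this decomposition is identified, the rest is bookkeeping.
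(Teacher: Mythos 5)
Your proposal is correct and follows essentially the same route as the paper's proof in the appendix: both reduce the difference to $\int \mathbf{1}_{h(\bm{x}+\bm{s})=k}\bigl(v(\bm{s}-\phi\bm{\hat{\phi}})-v(\bm{s})\bigr)d\bm{s}$, bound it via the positive/negative parts of the difference kernel (the paper phrases this as sandwiching the integrand between $\min(0,\cdot)$ and $\max(0,\cdot)$, which is the same decomposition), use $\int v = 1$ to identify each part's mass with $1-\int\min(v(\cdot-\phi\bm{\hat{\phi}}),v)$, and finally weaken to the minimum over directions. Your explicit remark about avoiding the factor-of-two loss from a naive $L^1$ triangle inequality is exactly the point the paper's $\max(0,\cdot)/\min(0,\cdot)$ sandwich is making implicitly.
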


Proof of \cref{lemma:bound_change} is given in \cref{app:bound_change}. Essentially, for all inputs shifting a distance $\phi$, the $\mu_k$ value difference between the original input and the shifted input will be bounded by the 1 minus the minimum overlap between two vicinities that are $\phi$ apart. From \cref{ineq:mu_change}, the correlation between the distance $\phi$ and maximum change of $\mu_k$ can be modelled as a function of $\phi$ expressed as
\begin{equation}
\label{eq:bijective}
    \mathop{\Delta}\limits_{\max}\mu_k (\phi)
    = \max_{\bm{\hat{\phi}}\,\in\,\mathbb{S}^{n-1}, \bm{x}\in\mathbb{R}^n}  \abs{\mu_k(\bm{x}) - \mu_k(\bm{x} + \phi\bm{\hat{\phi}}) }
\end{equation}

Note that $\Delta_{\max}\mu_k(\phi)$ is a monotonic function, \emph{i.e.}, a greater shift distance $\phi$ is required if the maximal change in $\mu_k$ needs to be increased. \cref{thm:continuous} leverages this monotonicity to formally show the connection between the Bayes error when optimising towards probabilistic robustness and that towards deterministic robustness.

\begin{theorem}
\label{thm:continuous}
    If $h^*$ is optimal for the probabilistic robustness on a given distribution, \emph{i.e.}, $h^* = \argmin_{h\in \set{\mathbb{R}^n\to \set{0, 1, {\scriptstyle\ldots}, K-1}} }\Upsilon^-_\textnormal{prob}(D, h, \mathcal{V}, \kappa)$, then there is a lower bound on the distance between an input $\bm{x}$ and any of its adversarial examples if probabilistic consistency is satisfied on $\bm{x}$. Formally,
    \begin{equation}
    \begin{aligned}
        \forall \bm{x}\in \mathbb{R}^n.~ \Big(\big(\exists k \in \set{0, 1, {\scriptstyle\ldots}, K-1}.~ \mu_k(\bm{x}) > 1 - \kappa\big) \to\\
        \forall \bm{x'}\in \mathbb{R}^n.~ \left((h^*(\bm{x'}) = k) \lor \left(\abs{\bm{x}- \bm{x'}} \geq (\Delta_{\max}\mu_k)^{-1}(1/2-\kappa)\right)\right)\Big).
    \end{aligned}
    \end{equation}
\end{theorem}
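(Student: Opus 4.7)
The plan is to show the contrapositive geometrically: if $\bm{x'}$ is strictly closer to $\bm{x}$ than the claimed threshold, then $h^*(\bm{x'})$ must in fact equal $k$. The key ingredients are already in hand: \cref{lemma:bound_change} (and the derived quantity $\Delta_{\max}\mu_k$ in \cref{eq:bijective}) controls how fast $\mu_k$ can change under a spatial shift, while \cref{thm:align} identifies $h^*$ with $\argmax_k \mu_k$. So the argument is essentially a Lipschitz-style propagation of the margin $\mu_k(\bm{x}) - (1-\kappa) > 0$ from $\bm{x}$ to any nearby $\bm{x'}$, combined with a pigeonhole observation that a class with $\mu_k(\bm{x'}) > 1/2$ must be the argmax.

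First, I would fix $\bm{x}$ with $\mu_k(\bm{x}) > 1 - \kappa$ and pick an arbitrary $\bm{x'} \in \mathbb{R}^n$ with $|\bm{x} - \bm{x'}| < (\Delta_{\max}\mu_k)^{-1}(1/2 - \kappa)$; write $\phi = |\bm{x} - \bm{x'}|$ and $\bm{\hat{\phi}} = (\bm{x'} - \bm{x})/\phi$ so that $\bm{x'} = \bm{x} + \phi \bm{\hat{\phi}}$. By the definition of $\Delta_{\max}\mu_k$ in \cref{eq:bijective}, which is a supremum of $|\mu_k(\bm{x}) - \mu_k(\bm{x} + \phi\bm{\hat{\phi}})|$ over all base points and directions, we have
\begin{equation}
\abs{\mu_k(\bm{x}) - \mu_k(\bm{x'})} \le \Delta_{\max}\mu_k(\phi).
\end{equation}
Since $\Delta_{\max}\mu_k$ is monotone (noted right after \cref{eq:bijective}), $\phi < (\Delta_{\max}\mu_k)^{-1}(1/2 - \kappa)$ gives $\Delta_{\max}\mu_k(\phi) < 1/2 - \kappa$, hence
\begin{equation}
\mu_k(\bm{x'}) > \mu_k(\bm{x}) - (1/2 - \kappa) > (1 - \kappa) - (1/2 - \kappa) = 1/2.
\end{equation}

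Next, since $\sum_{k'=0}^{K-1} \mu_{k'}(\bm{x'}) = 1$, the bound $\mu_k(\bm{x'}) > 1/2$ forces $\mu_{k'}(\bm{x'}) < 1/2 < \mu_k(\bm{x'})$ for every $k' \neq k$, so $k = \argmax_{k'} \mu_{k'}(\bm{x'})$ uniquely. Invoking \cref{thm:align} applied to $h^*$ at the point $\bm{x'}$, we conclude $h^*(\bm{x'}) = \argmax_{k'} \mu_{k'}(\bm{x'}) = k$. This establishes the contrapositive: whenever the distance lies strictly below the threshold, the first disjunct holds; so in every case either $h^*(\bm{x'}) = k$ or $|\bm{x} - \bm{x'}| \ge (\Delta_{\max}\mu_k)^{-1}(1/2 - \kappa)$.

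The main delicate point I expect is the well-posedness of $(\Delta_{\max}\mu_k)^{-1}$: strictly speaking, $\Delta_{\max}\mu_k$ need only be monotone non-decreasing, not strictly increasing or surjective onto $[0,1]$, so the inverse should be read as a generalized inverse $\inf\{\phi : \Delta_{\max}\mu_k(\phi) \ge 1/2 - \kappa\}$. With that convention, any $\phi$ strictly smaller than this infimum satisfies $\Delta_{\max}\mu_k(\phi) < 1/2 - \kappa$, which is exactly what the above chain of inequalities uses. A secondary subtlety is that \cref{thm:align} determines $h^*$ only up to ties in $\argmax_k \mu_k$, but the strict inequality $\mu_k(\bm{x'}) > 1/2 > \mu_{k'}(\bm{x'})$ eliminates ties at $\bm{x'}$, so the identification $h^*(\bm{x'}) = k$ is unambiguous.
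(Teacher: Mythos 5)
Your proposal is correct and follows essentially the same route as the paper: both arguments rest on the $1/2-\kappa$ margin (a neighbour can only disagree with $k$ if $\mu_k$ drops to at most $1/2$, by \cref{thm:align}), the bound $\abs{\mu_k(\bm{x})-\mu_k(\bm{x'})}\le \Delta_{\max}\mu_k(\phi)$, and the monotonicity of $\Delta_{\max}\mu_k$ — the paper merely spends most of its proof re-deriving that monotonicity by contradiction via the vicinity-overlap inequalities, whereas you cite it and argue the contrapositive directly. Your explicit treatment of $(\Delta_{\max}\mu_k)^{-1}$ as a generalized inverse is a point of care the paper leaves implicit, and is a welcome addition.
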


\begin{proof}
Suppose an input $\bm{x}$ whose prediction $h(\bm{x})$ is consistent, \emph{i.e.}, $\exists k.\, \mu_k(\bm{x})\ge 1-\kappa$. Let $k^*$ denote this predicted class. Let $\phi_1$ and $\phi_2$ denote two scalar distances to shift $\bm{x}$ and assume the following features of these two distances. First, $\Delta_{\max}\mu_{k^*}(\phi_1) = 1/2-\kappa$ and $\Delta\mu_{k^*}(\phi_2, \bm{\hat{\phi}}_2) > 1/2-\kappa$. The latter indicates that in some direction, moving the input by a distance of $\phi_2$ results in a change in $\mu_{k^*}$ greater than $1/2-\kappa$. Second, $\phi_1 > \phi_2$. From the first condition, we can derive that
\begin{equation}
\label[ineq]{ineq:mono1}
    \min_{\bm{\hat{\phi}}_1\,\in\,\mathbb{S}^{n-1}}\int_{\bm{t}\in\mathbb{R}^n}\min \left( v(\bm{t}  -\phi_1\bm{\hat{\phi}}_1),  v(\bm{t})\right) d\bm{t}
>\int_{\bm{t}\in\mathbb{R}^n}\min \left( v(\bm{t}  -\phi_2\bm{\hat{\phi}}_2),  v(\bm{t})\right) d\bm{t}.
\end{equation}
In other words, we can find some vector $\phi_2\bm{\hat{\phi}}_2$ such that this shift results in a vicinity overlap smaller than the minimum vicinity overlap caused by a $\phi_1$-magnitude shift.

Next, we further shift $\bm{x}+\phi_2\bm{\hat{\phi}}_2$ along $\bm{\hat{\phi}}_2$ direction but with the magnitude $\phi_1  - \phi_2$. This new position, $\bm{x}+\bm{\hat{\phi}}_2\phi_1$, is farther away from $\bm{x}$ than $\bm{x}+\phi_2\bm{\hat{\phi}}_2$ is because $\phi_1 > \phi_2$. Additionally, $\bm{x}+\bm{\hat{\phi}}_2\phi_1$ results in at most the same vicinity overlap (size) as $\bm{x}+\phi_2\bm{\hat{\phi}}_2$ does because $v$ is quasiconcave. Formally, 
\begin{equation}
\label[ineq]{ineq:mono2}
    \int_{\bm{t}\in\mathbb{R}^n}\min \left( v(\bm{t}  -\phi_2\bm{\hat{\phi}}_2),  v(\bm{t})\right) d\bm{t}
    \ge \int_{\bm{t}\in\mathbb{R}^n}\min \left( v(\bm{t}  -\phi_1\bm{\hat{\phi}}_2),  v(\bm{t})\right) d\bm{t}.
\end{equation}
Observe that \cref{ineq:mono2} contradicts (\ref{ineq:mono1}). Therefore, the two assumptions cannot hold simultaneously. An adversarial example requires $\bm{x'}\in\mathbb{V}(\bm{x}), \mu_{k^*}(\bm{x'}) \le 1/2$. Thus,
if $\Delta_{\max}\mu_{k^*}(\phi_1) = 1/2-\kappa$, the distance between a consistent input and any of its adversarial examples is greater than (or equal to) $\phi_1$. The rationale of $1/2-\kappa$ is proven in \cref{app:continuous}.
\end{proof}
Intuitively, at optimal probabilistic robust accuracy, if an input has a probabilistically consistent prediction, all its neighbours in some specific vicinity are predicted the same. Namely, the prediction at this input is deterministically robust within this (likely smaller) vicinity.
\cref{cor:slope} suggests that the bounds stated in \cref{thm:continuous} persist even as the shift approaches zero.
\begin{corollary}
\label{cor:slope}
    There exists a finite real value such that for all inputs, the directional derivative value with respect to any arbitrary nonzero vector $\bm{\hat{\phi}}$ (unit vector) does not exceed this finite value and does not fall below the negative of this value. Formally: where $\cdot$ denotes the dot product,
    \begin{equation}
        \exists b\in\mathbb{R}, \forall \bm{x}\in \mathbb{R}^n, \forall\bm{\hat{\phi}}\in\mathbb{S}^{n-1},~  -b \leq \nabla \mu_k(\bm{x} )\cdot \bm{\hat{\phi}} \leq b.
    \end{equation}
\end{corollary}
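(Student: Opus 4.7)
The plan is to derive the bound by invoking \cref{lemma:bound_change} and viewing its conclusion as a uniform Lipschitz-type estimate on $\mu_k$, from which I extract a finite slope at $\phi=0$ that serves as $b$.

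First, I would apply \cref{lemma:bound_change} together with the definition of $\Delta_{\max}\mu_k$ in \cref{eq:bijective} to obtain, for every unit vector $\bm{\hat{\phi}}$, every $\bm{x}\in\mathbb{R}^n$, and every sufficiently small $\phi>0$,
\begin{equation*}
|\mu_k(\bm{x}+\phi\bm{\hat{\phi}}) - \mu_k(\bm{x})| \le \Delta_{\max}\mu_k(\phi).
\end{equation*}
Dividing by $\phi$ and letting $\phi\to 0^+$ yields, wherever the directional derivative exists,
\begin{equation*}
\left|\nabla \mu_k(\bm{x})\cdot \bm{\hat{\phi}}\right| \le \limsup_{\phi\to 0^+}\frac{\Delta_{\max}\mu_k(\phi)}{\phi} =: b,
\end{equation*}
and running the same estimate with $-\bm{\hat{\phi}}$ gives the matching lower bound $-b$. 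Since $b$ depends neither on $\bm{x}$ nor on $\bm{\hat{\phi}}$, it remains only to verify that $b$ is finite.

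Second, I would prove finiteness of $b$ by exploiting the convolutional form $\mu_k = \mathbf{1}_{h(\cdot)=k} * v$ from \cref{eq:mu_conv}. The quantity $1 - \int \min(v(\bm{t}-\phi\bm{\hat{\phi'}}), v(\bm{t}))\,d\bm{t}$ appearing in \cref{ineq:mu_change} is precisely the total variation distance between $v$ and its $\phi\bm{\hat{\phi'}}$-translate. Because $v$ is an even, quasiconcave probability density, this total variation grows at most linearly in $\phi$ as $\phi\to 0^+$: for smooth $v$ (e.g.\ Gaussian) one differentiates under the integral to get a slope bounded by $\tfrac{1}{2}\|\nabla v\|_{L^1}$; for piecewise-constant $v$ (e.g.\ uniform on a ball or cube) one estimates the symmetric difference of the shifted supports by surface measure times shift length. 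Taking the supremum over $\bm{\hat{\phi'}}\in\mathbb{S}^{n-1}$ still yields a finite quantity by compactness of the unit sphere and continuous dependence of this slope on direction.

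The main obstacle is this second step, namely establishing a \emph{uniform linear} upper bound $\Delta_{\max}\mu_k(\phi) \le b\,\phi$ for small $\phi$ rather than merely the qualitative statement $\Delta_{\max}\mu_k(\phi)\to 0$. The evenness and quasiconcavity assumptions on $v$ are doing real work here: without them one cannot exclude pathological densities (such as those with infinitely many narrow peaks) whose translation total variation fails to decay linearly. Once the linear bound is in place, setting $b$ to be its slope — equivalently the right-hand side of the $\limsup$ above — completes the proof.
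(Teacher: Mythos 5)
Your proposal is correct and lands on the same essential quantity as the paper, but it gets there by a different route. The paper's proof differentiates the convolution $\mu_k = \mathbf{1}_{h(\cdot)=k} * v$ directly: it writes the directional derivative as a limit of difference quotients, interchanges the limit with the integral, and pushes the derivative onto the vicinity density to obtain $\nabla\mu_k(\bm{x})\cdot\bm{\hat{\phi}} = -\int \mathbf{1}_{h(\bm{t}+\bm{x})=k}\,(\nabla v(\bm{t})\cdot\bm{\hat{\phi}})\,d\bm{t}$, which it then bounds by choosing the indicator to align with the sign of the integrand, yielding a constant of the form $\tfrac{1}{2}\int\abs{\nabla v(\bm{t})\cdot\bm{\hat{\phi}}^*}\,d\bm{t}$. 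You instead treat \cref{lemma:bound_change} as a black-box Lipschitz-type estimate, divide by $\phi$, and take $\limsup_{\phi\to 0^+}\Delta_{\max}\mu_k(\phi)/\phi$ as your $b$; finiteness then reduces to showing the translation total-variation of $v$ decays linearly, which in the smooth case gives exactly the paper's constant $\tfrac{1}{2}\norm{\partial_{\bm{\hat{\phi}}}v}_{L^1}$. The trade-off is where the analytic burden sits: the paper must justify interchanging limit and integral (it asserts this without a dominating function), while you must justify the uniform linear bound $\Delta_{\max}\mu_k(\phi)\le b\phi$ for small $\phi$, which you correctly flag as the crux and handle separately for smooth and piecewise-constant $v$. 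Your route has the advantage of covering the uniform $L^p$ vicinities (where $v$ is not differentiable and the paper's $\nabla v$ is only a surface measure) without formally invoking distributional derivatives, and it also makes explicit that the constant is independent of $\bm{x}$ and of direction because the $\min$ over $\bm{\hat{\phi'}}$ is already built into \cref{ineq:mu_change}. Both arguments share the same honest caveat, which you state and the paper glosses over with ``$v$ is not a delta distribution'': finiteness genuinely requires $v$ to have bounded variation under translation, not merely to be a non-degenerate even quasiconcave density.
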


Proof of \cref{cor:slope} is provided in \cref{app:slope}. A one-dimensional example in \cref{app:uniform1d} illustrates this idea. We then present an implication of \cref{thm:continuous} for $L^\infty$-norm in \cref{cor:linf}.

\begin{corollary}
\label{cor:linf}
    If $h^*$ is optimal for the probabilistic robustness with respect to an $L^\infty$-vicinity on distribution $D$, then the vicninty size of the deterministically robust vicinity $\mathbb{V}^{\downarrow\kappa}$ around each probabilistically consistent input is $\epsilon(1 - (2\kappa)^{\frac{1}{n}})$. 
\end{corollary}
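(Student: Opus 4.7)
The plan is to specialise the geometric argument of \cref{thm:continuous} to the $L^\infty$-vicinity setting, where $v(\bm{t})=(2\epsilon)^{-n}\mathbf{1}_{\|\bm{t}\|_\infty\le\epsilon}$ is uniform on the cube of half-side $\epsilon$. Substituting this density into the overlap integral appearing in \cref{lemma:bound_change}, two such cubes translated by a vector $\bm{\delta}$ overlap in the fraction $\prod_{i=1}^{n}\max(0,\,1-|\delta_i|/(2\epsilon))$. For a fixed $L^\infty$-magnitude $r=\|\bm{\delta}\|_\infty$, this product is minimised by the diagonal direction in which $|\delta_1|=\cdots=|\delta_n|=r$, yielding the worst-case overlap $(1-r/(2\epsilon))^n$; a short exchange argument (each factor is decreasing in $|\delta_i|$) rules out any other configuration driving the product lower at fixed $L^\infty$-magnitude.

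Next I would recast the probabilistic-consistency hypothesis of \cref{thm:continuous} as a global overlap bound. If $\bm{x}$ is probabilistically consistent with the optimal label $k_1$ and a second probabilistically consistent input $\bm{x}'$ carries a distinct optimal label $k_2$, then at most a $\kappa$-fraction of $\bm{x}$'s vicinity is predicted $k_2$ while at least a $(1-\kappa)$-fraction of $\bm{x}'$'s vicinity is predicted $k_2$. The overlap sits inside $\bm{x}$'s vicinity and can therefore contribute at most $\kappa$ to the $k_2$-mass seen from $\bm{x}'$; the non-overlap portion of $\bm{x}'$'s vicinity must still carry $(1-2\kappa)$ of the $k_2$-mass, forcing the overlap fraction itself to be at most $2\kappa$.

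Combining the two halves in the extremal diagonal configuration gives $(1-r/(2\epsilon))^n\le 2\kappa$, which rearranges to $r\ge 2\epsilon\bigl(1-(2\kappa)^{1/n}\bigr)$. By the symmetry of the diagonal placement together with the quasiconcavity of $v$, the optimal $h^*$ switches labels at the midpoint of $\bm{x}$ and $\bm{x}'$, so the nearest adversarial example of $\bm{x}$ sits at $L^\infty$-distance $r/2=\epsilon\bigl(1-(2\kappa)^{1/n}\bigr)$. This midpoint is therefore the boundary of $\mathbb{V}^{\downarrow\kappa}(\bm{x})$ in the diagonal direction and, by the cube symmetry, in all directions, giving the claimed half-side length.

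The main obstacle is the ``midpoint'' step: one has to exclude the possibility that the prediction transitions strictly between $\bm{x}$ and the midpoint along the diagonal. This reuses the contradiction kernel inside \cref{thm:continuous} applied to the segment joining $\bm{x}$ and $\bm{x}'$: any closer transition would demand a drop in $\mu_{k_1}$ of more than $1-2\kappa$ over a distance strictly less than $r/2$, which violates the geometric overlap bound just derived, thereby pinning the transition to the midpoint and completing the equality.
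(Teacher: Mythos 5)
Your proposal follows essentially the same route as the paper's proof in \cref{app:linf}: compute the overlap of two translated $L^\infty$ cubes, identify the diagonal as the direction of fastest overlap decay, bound the overlap between two differently-labelled probabilistically consistent inputs by $2\kappa$, solve $(1-r/(2\epsilon))^n = 2\kappa$, and halve the resulting separation via the midpoint/symmetry argument. Your mass-accounting derivation of the $2\kappa$ overlap bound (at most $\kappa$ of the $k_2$-mass can live in the overlap, so the non-overlap must carry $1-2\kappa$) is a somewhat cleaner justification than the paper's ``drop of $1-\kappa-\kappa$'' phrasing, and you correctly flag the same midpoint step that the paper also treats informally, but these are presentational rather than substantive differences.
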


Proof of \cref{cor:linf} is in \cref{app:linf}. We visualise its effect using a two-dimensional example in \cref{fig:2d_square_example}. Let $\bm{x}, \bm{x'}$ be two probabilistically consistent inputs and $h(\bm{x})\neq h(\bm{x'})$. To minimise $\abs{\bm{x'}-\bm{x}}$, $\bm{x'}$ must be in the diagonal direction (as in \cref{cor:linf}). The shift from $\bm{x}$ to $\bm{x'}$ is $-2(\phi_1\bm{\hat{x_1}} + \phi_2\bm{\hat{x_2}})$. Each triangle accounts for $\kappa$ of the original vicinity volume. Box $\mathbb{V}^{\downarrow\kappa}$ has side length $2\phi_i$. Thus, solving $(2\epsilon - 2\phi_i)^2 = 2\kappa(2\epsilon)^2$, we get $\mathbb{V}^{\downarrow\kappa}$ has vicinity size $\phi_1 = \phi_2 = (1 - \sqrt{2\kappa})\epsilon$. Although \cref{cor:linf} concerns $L^{\infty}$, we can analyse other types similarly, \emph{i.e.}, find the direction with the fastest vicinity overlap decrease and measure the distance to the nearest adversarial example.

In brief, an optimal probabilistic robust accuracy has the following implications. \cref{thm:continuous} suggests that each probabilistically robust input is also deterministically robust, but within a much smaller vicinity. \cref{cor:linf} further quantifies the size of this vicinity. Particularly, the order $1/n$ lets vicinity shrink fast as $n$ grows, making probabilistic robust accuracy higher. Overall, deterministic robust accuracy with the smaller vicinity bounds probabilistic robust accuracy. This is formally captured in \cref{thm:bound}. The effect of this reduced Bayes uncertainty is illustrated in \cref{fig:runex1_3}.

\begin{theorem}
\label{thm:bound}
    Given distribution $D$, vicinity with size $\epsilon$, and tolerance level $\kappa$, the probabilistic robust accuracy has an upper bound as shown in \cref{eq:ub}, where $\mathbb{V}^{\downarrow\kappa}$ or $v^{\downarrow\kappa}$ is the ``smaller'' vicinity 
    and $\mathbb{K}$ follows the definition in \cref{eq:ub_compute_2}, denoting the domain near the boundary.
    \begin{equation}
    \label{eq:ub}
    \begin{aligned}
        \Upsilon^+_\textnormal{prob}\left(D, h, \mathcal{V}, \kappa\right) &\le\max_{ h\in\{\mathbb{R}^n\to\set{0,1,{\scriptstyle\ldots},K-1}\} } \Upsilon^+_\textnormal{rob}\left(D, h, \mathbb{V}^{\downarrow\kappa}\right)\\
        &=\operatorname{E}_{(\mathbf{x}, \textnormal{y}) \sim (D*\,v^{\downarrow\kappa})}\left[\max_k p(\textnormal{y}=k|\mathbf{x})\mathbf{1}_{\mathbf{x}\notin\mathbb{K}_{\lceil D\rceil*\,v^{\downarrow\kappa}}}\right]
    \end{aligned}
    \end{equation}
\end{theorem}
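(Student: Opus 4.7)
The plan is to chain Theorem~\ref{thm:continuous} (any probabilistically robust input of an optimal $h^*$ is deterministically robust on the smaller vicinity $\mathbb{V}^{\downarrow\kappa}$) with the Bayes-error characterisation for deterministic robustness from \cref{eq:ub_compute_2}. The core claim reduces to showing two separate inequalities: (i) for every measurable $h$, $\Upsilon^+_\text{prob}(D,h,\mathcal{V},\kappa) \le \max_{h'} \Upsilon^+_\text{rob}(D,h',\mathbb{V}^{\downarrow\kappa})$, and (ii) this maximum coincides with the expectation on the right-hand side of \cref{eq:ub}.

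First, I would argue (i) by focusing on an optimal $h^*$ for probabilistic robustness (which exists and attains the sup by Theorem~\ref{thm:align}); since $\Upsilon^+_\text{prob}(D,h,\mathcal{V},\kappa) \le \Upsilon^+_\text{prob}(D,h^*,\mathcal{V},\kappa)$ for all $h$, it suffices to bound the latter. For every $(\bm{x},y)$ contributing to $\Upsilon^+_\text{prob}(D,h^*,\mathcal{V},\kappa)$, we have $h^*(\bm{x})=y$ and probabilistic consistency at $\bm{x}$, so by Theorem~\ref{thm:continuous} no adversarial example of $\bm{x}$ lies within $\mathbb{V}^{\downarrow\kappa}(\bm{x})$. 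Hence $(\bm{x},y)$ is deterministically robust on $\mathbb{V}^{\downarrow\kappa}$ under $h^*$, and the indicator comparison gives
\begin{equation*}
\Upsilon^+_\text{prob}(D,h^*,\mathcal{V},\kappa) \le \Upsilon^+_\text{rob}(D,h^*,\mathbb{V}^{\downarrow\kappa}) \le \max_{h'\in\{\mathbb{R}^n\to\set{0,\ldots,K-1}\}}\Upsilon^+_\text{rob}(D,h',\mathbb{V}^{\downarrow\kappa}).
\end{equation*}

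For (ii), I would directly invoke \cref{eq:ub_compute_2} applied with vicinity $\mathbb{V}^{\downarrow\kappa}$ and density $v^{\downarrow\kappa}$. That equation expresses the minimum deterministic robust error (equivalently, the complement of the maximum deterministic robust accuracy) as the expectation of $1 - \max_k p(\textnormal{y}=k\mid\mathbf{x})\mathbf{1}_{\mathbf{x}\notin\mathbb{K}_{D^\dagger}}$ under $D*v^{\downarrow\kappa}$, with $D^\dagger = \lceil D*v^{\downarrow\kappa}\rceil * v^{\downarrow\kappa}$, which matches the notation $\mathbb{K}_{\lceil D\rceil * v^{\downarrow\kappa}}$ used in the statement (modulo the convention that the hardening is taken at the relevant convolved distribution). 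Taking complements turns the minimum error into the maximum accuracy and yields the displayed equality.

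The main obstacle I anticipate is the \emph{vicinity-size} aspect: Theorem~\ref{thm:continuous} gives a \emph{direction-dependent} lower bound $(\Delta_{\max}\mu_k)^{-1}(1/2-\kappa)$ on the distance to any adversarial example, but to inherit the clean convolutional form of \cref{eq:ub_compute_2} I need a single vicinity $\mathbb{V}^{\downarrow\kappa}$ with a quasiconcave even density $v^{\downarrow\kappa}$ that fits inside every such deterministic safety ball. For the case of $L^\infty$ this is exactly Corollary~\ref{cor:linf}, which explicitly constructs $\mathbb{V}^{\downarrow\kappa}$ as a box of side $2(1-\sqrt[n]{2\kappa})\epsilon$; for general vicinities I would define $\mathbb{V}^{\downarrow\kappa}$ as the largest homothetic shrink of $\mathcal{V}$ contained in the deterministic safety region guaranteed by Theorem~\ref{thm:continuous}, and verify that $v^{\downarrow\kappa}$ inherits evenness and quasiconcavity so that \cref{eq:ub_compute_2} applies. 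Once this identification is pinned down, the bound follows immediately by combining (i) and (ii).
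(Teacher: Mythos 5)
Your proposal follows essentially the same route as the paper's proof: use Theorem~\ref{thm:continuous} to show that, at the optimum, probabilistic consistency on $\mathcal{V}$ implies deterministic robustness on $\mathbb{V}^{\downarrow\kappa}$, convert that pointwise implication into an inequality between the two accuracies, and then invoke the known Bayes-error bound (\cref{eq:ub_compute_2}) for deterministic robustness with the smaller vicinity. You are somewhat more explicit than the paper about reducing arbitrary $h$ to the optimal $h^*$ and about how $\mathbb{V}^{\downarrow\kappa}$ is pinned down from the direction-dependent bound (via Corollary~\ref{cor:linf}), but these are refinements of the same argument rather than a different approach.
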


\begin{proof}
    \cref{thm:continuous} infers $\forall \bm{x}, \left((P_{\mathbf{x'}\sim\,\mathcal{V}(\bm{x})} (h(\mathbf{x'}) \neq h(\bm{x})) \le \kappa) \to \forall\bm{x'}\in\mathbb{V}^{\downarrow\kappa}(\bm{x}).~ h(\bm{x'}) =  h(\bm{x})\right)$ at optimal probabilistic robust accuracy. Consider this expression in the form of $\forall \bm{x}, \textnormal{Event}_1(\bm{x}) \to \textnormal{Event}_2(\bm{x})$.  This implies that the occurrence rate of Event 2 (deterministic robust accuracy with $\mathbb{V}^{\downarrow\kappa}$) always upper bounds the rate of Event 1 (probabilistic robust accuracy with $\mathbb{V}$). Additionally, the deterministic robust accuracy with $\mathbb{V}^{\downarrow\kappa}$ has an upper bound, as what has been shown in \cite{zhang2024certified}.
\end{proof}

Besides the upper bound, we also find a relatively loose lower bound of probabilistic robust accuracy as shown in \cref{thm:triplet}. This lower bound is the deterministic robust accuracy when the vicinity size is the same as the vicinity size assumed for probabilistic robustness.
\begin{theorem}
\label{thm:triplet}
    The upper bound of probabilistic robust accuracy monotonically increases as $\kappa$ grows. Further, for all tolerance levels $\kappa$, the upper bound of probabilistic robust accuracy lies between the upper bound of deterministic robust accuracy and the upper bound of vanilla accuracy. Formally,
    \begin{equation}
    \begin{aligned}
        \forall\kappa_1,\kappa_2.&\quad (\kappa_1 <\kappa_2)\to \min _{h}\Upsilon^+_{\textnormal{prob}}(D, h, \mathcal{V}, \kappa_1) \le \min _{h}\Upsilon^+_{\textnormal{prob}}(D, h, \mathcal{V}, \kappa_2) \\
        \forall\kappa.& \quad \min _{h} \Upsilon^+_{\textnormal{rob}}(D, h, \mathbb{V}) \le \min _{h}\Upsilon^+_{\textnormal{prob}}(D, h, \mathcal{V}, \kappa) \le \min _{h} \Upsilon^+_{\textnormal{acc}}(D, h)
    \end{aligned}
    \end{equation}    
\end{theorem}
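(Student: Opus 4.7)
Both claims in \cref{thm:triplet} reduce to pointwise inequalities between indicator events at each input $\bm{x}$, followed by integration against $D$ and an optimization over $h$. The plan is to fix an arbitrary measurable classifier $h$ and compare the three events (``correct at $\bm{x}$'', ``probabilistically robust at $\bm{x}$ with tolerance $\kappa$'', ``deterministically robust at $\bm{x}$'') directly from their definitions in \cref{eq:robustness,eq:probrob} and the definition of vanilla accuracy. Monotonicity of expectation and of the outer optimum over $h$ then lifts each pointwise inclusion to the stated inequality between optima. Unlike \cref{thm:bound}, this result does not require \cref{thm:continuous} or any convolution identity; it is essentially a bookkeeping argument on events.

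\textbf{Monotonicity in $\kappa$.} First I would fix $h$ and any $(\bm{x}, y)$, and observe from \cref{eq:probrob} that the probabilistically-robust event decomposes as $\{h(\bm{x}) = y\} \cap \{P_{\mathbf{x'}\sim\mathcal{V}(\bm{x})}(h(\mathbf{x'}) \neq h(\bm{x})) \le \kappa\}$. The second factor is monotone in $\kappa$: if $\kappa_1 < \kappa_2$, then $\{\,\cdot\, \le \kappa_1\} \subseteq \{\,\cdot\, \le \kappa_2\}$, so the corresponding indicator is pointwise non-decreasing in $\kappa$. Integrating against $D$ yields $\Upsilon^+_\textnormal{prob}(D, h, \mathcal{V}, \kappa_1) \le \Upsilon^+_\textnormal{prob}(D, h, \mathcal{V}, \kappa_2)$ for every $h$. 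Taking the outer optimum over $h$ on both sides preserves the inequality, which gives the monotonicity clause.

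\textbf{Sandwich bounds.} Next I would chain three events at a fixed $(\bm{x}, y)$. Deterministic robustness of $\bm{x}$ forces $P_{\mathbf{x'}\sim\mathcal{V}(\bm{x})}(h(\mathbf{x'}) \neq h(\bm{x})) = 0 \le \kappa$, so the deterministically-robust event is contained in the probabilistically-robust event; and the probabilistically-robust event is trivially contained in the correctness event $\{h(\bm{x}) = y\}$ since the consistency clause only further restricts it. Integrating the three nested indicators against $D$ gives, for every $h$, the pointwise chain $\Upsilon^+_\textnormal{rob}(D, h, \mathbb{V}) \le \Upsilon^+_\textnormal{prob}(D, h, \mathcal{V}, \kappa) \le \Upsilon^+_\textnormal{acc}(D, h)$. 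Taking the outer optimum over $h$ preserves each inequality, which yields the sandwich clause.

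\textbf{Where the care is needed.} The argument has no real obstacle; the one point to check is that the same classifier need not attain all three optima, but this is irrelevant because the pointwise inequality holds for every $h$, hence passes through any monotone aggregator over $h$ (both $\sup$ and $\inf$). I would also note in passing that the ``$\min_h$'' in the statement of \cref{thm:triplet} appears to be a typographical slip for $\max_h$, since an ``upper bound of accuracy'' is a supremum; however the proof is insensitive to this since the pointwise-in-$h$ inequalities propagate either way.
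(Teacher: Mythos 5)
Your proof is correct and follows essentially the same route as the paper's: a pointwise comparison at each input for every fixed $h$, followed by integration over $D$ and passing to the optimum over $h$. The only difference is cosmetic --- you argue by inclusion of the events in \cref{eq:robustness} and \cref{eq:probrob} directly, whereas the paper encodes the same inclusions algebraically through its combined-error formula $e(\bm{x},\kappa)$ and the monotonicity of the unit step function; your remark that the $\min_h$ in the statement should be read as the optimum over $h$ (and that the pointwise-in-$h$ inequality passes through either aggregator) likewise matches the paper's intent.
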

\begin{proof}
    Let $\kappa_1<\kappa_2$ and $k^* =\argmax_k \mu_k(\bm{x})$, and then we get the sign of $e(\bm{x}, \kappa_1) - e(\bm{x}, \kappa_2)$ is the same as $u\big(\kappa_2 - 1 + \mu_{k^*}(\bm{x}) \big)  - u\left(\kappa_1 - 1 + \mu_{k^*}(\bm{x}) \right)$. This is because the posterior probability $P(\textnormal{y}\mid \mathbf{x}=\bm{x})$ is non-negative. Since $\kappa_1<\kappa_2$, and unit step function monotonically increases, we get $e(\bm{x}, \kappa_1) - e(\bm{x}, \kappa_2)\ge 0$. A smaller $\kappa$ leads to a lower or equal upper bound of probabilistic robust accuracy. For deterministic robust accuracy, $\kappa=0$, which is the least value. Thus, deterministic robust accuracy has a smaller upper bound than probabilistic robust accuracy does.
    

    On the other hand, from the intuition of error combination in \cref{sec:modelling}, we get that the combined error is $e(\bm{x}) = 1-(1 - e_\text{cns}(\bm{x}, h;\mathcal{V},\kappa)) (1- e_\text{cor}(\bm{x}, h; P(\textnormal{y}\mid\bm{x})))$. As $0\le e_\text{cns}\le 1$, we get $e(\bm{x}) \ge  1-(1 - e_\text{cor}(\bm{x}, h;\kappa))$. Note that the expectation of $e_\text{cor}(\bm{x}, h;\kappa)$ is the error in vanilla accuracy. Thus, $\Upsilon^+_\text{prob}(D, h, \mathcal{V}, \kappa) \le\Upsilon^+_\text{acc}(D, h)$. So it is with their upper bounds. We also provide an extended proof for this theorem in \cref{app:triplet}.
\end{proof}

In summary, we show that probabilistic robust accuracy is both lower and upper bounded. We also show why the upper bound of probabilistic robust accuracy can be much greater than that of deterministic robust accuracy. Intuitively, our result suggests that probabilistic robustness indeed allows us to sacrifice much less accuracy, compared to that of deterministic robustness. Furthermore, adopting a larger (more relaxed) $\kappa$ (up to 1/2) can effectively increase probabilistic robust accuracy. 


\section{Experiment}
\label{sec:experiment}


We conduct experiments\footnote{Available at \url{https://github.com/soumission-anonyme/irreducible.git}} to validate the above established results empirically. Note that \cref{thm:align} infers voting is optimal. \cref{sec:bound} establishes the probabilistic robust accuracy upper bound. Do these match empirical results? Further, ablation experiments on real-world distribution study how this upper bound changes as $\kappa$ grows. The relationship between accuracy, probabilistic, and deterministic robust accuracy is also studied. In the following, we describe the setups and then answer these questions.

\paragraph{Setup}

Our setup follows prior Bayes error studies~\cite{ishida2022performance,zhang2024certified}. We include four datasets: Moons~\cite{scikit-learn}, Chan~\cite{chen2023evaluating}, FashionMNIST~\cite{xiao2017fashion}, and CIFAR-10~\cite{krizhevsky2009learning}. Given each dataset, we apply a direct method~\cite{ishida2022performance} to compute the Bayes error. $L^\infty$-vicinity is set with $\epsilon = 0.15, 0.15, 0.1, 2/255$ for defining robustness on respective distribution. For deterministic robustness, the Bayes error follows \cref{eq:ub_compute_2}~\cite{zhang2024certified}. For probabilistic robustness, we set $\kappa=0.1$ by default~\cite{robey2022probabilistically} and vary $\kappa$ only for the ablation study. More details are in \cref{app:setup}. Statistic significance is included in \cref{app:sig}.

\paragraph{Does voting always increase probabilistic robust accuracy empirically?}

We first compute the probabilistic robust accuracy of some classifier $h$. We then compute that of a voting classifier $h^\dag$, where $h^\dag(\bm{x}) = \argmax_{k} P_{\mathbf{t}\sim\mathcal{V}(\bm{x})} (k=h(\mathbf{t}))$, with sample size 100. They are compared in \cref{tab:vote}. Training algorithms of $h$ include data augmentation (DA~\cite{shorten2019survey}), randomised smoothing (RS~\cite{cohen2019certified}), and condition value-at-risk (CVaR~\cite{robey2022probabilistically}) which is state-of-the-art (SOTA) for probabilistic robust accuracy. Note that voting always improves probabilistic robust accuracy (at least by +0.1\% or on average +1.58\% ). On DA and RS, the increase is significant (avg + 1.95\%), partly because they are not designed specifically for probabilistic robustness. On CVaR, while modest (avg + 0.85\%), we do observe an increase. This trend is maintained with a larger voting sample size (\cref{app:rq1}).

\begin{table}
  \caption{Probabilistic robustness of classifiers before and after voting. $\kappa=0.1$ and $\epsilon = 0.15, 0.15, 0.1, 2/255$ for Moons, Chan, FashionMNIST, and CIFAR-10.}
  \label{tab:vote}
  \centering
  \resizebox{\textwidth}{!}{
  \begin{tabular}{l*3{>{\centering}p{0.07\linewidth} p{0.2\linewidth}} >{\centering}p{0.07\linewidth} p{0.2\linewidth}}
    \toprule
    &\multicolumn{2}{c}{Moons} &\multicolumn{2}{c}{Chan} &\multicolumn{2}{c}{FashionMNIST} &\multicolumn{2}{c}{CIFAR-10}                   \\
    \cmidrule(r){1-1} \cmidrule(r){2-3} \cmidrule(r){4-5} \cmidrule(r){6-7} \cmidrule(r){8-9}
    DA~\cite{shorten2019survey} & 85.35 & \textbf{85.60}~~(+0.3\%) & 67.96 & \textbf{68.86}~~(+0.9\%) & 84.12 & \textbf{87.48}~~(+3.7\%) & 76.07  & \textbf{81.38}~~(+5.3\%) \\
    \midrule
    RS~\cite{cohen2019certified} &  84.76 & \textbf{85.18}~~(+0.4\%)  & 64.67 & \textbf{66.77}~~(+1.9\%) & 86.29 & \textbf{88.13}~~(+2.1\%) & 87.98  & \textbf{88.95}~~(+1.0\%) \\
    \midrule
    CVaR~\cite{robey2022probabilistically}     & 85.52 & \textbf{85.66}~~(+0.1\%)& 69.46 & \textbf{70.05}~~(+0.6\%) & 88.50 & \textbf{91.07}~~(+2.6\%) & 90.63  & \textbf{90.77}~~(+0.1\%)      \\
    \bottomrule
  \end{tabular}}
\end{table}

\begin{figure}[t]
  \centering
  \hfill
  \begin{subfigure}[t]{\linewidth}
    \centering
    \includegraphics[width=0.8\linewidth, trim=0.5cm 0.4cm 1cm 1.cm, clip]{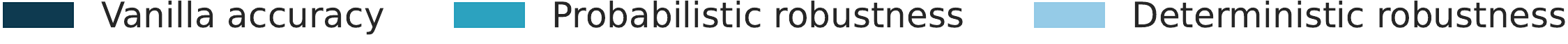}
  \end{subfigure}
  \hfill 
  \begin{subfigure}[t]{0.2\linewidth}
    \centering
    \includegraphics[width=\linewidth]{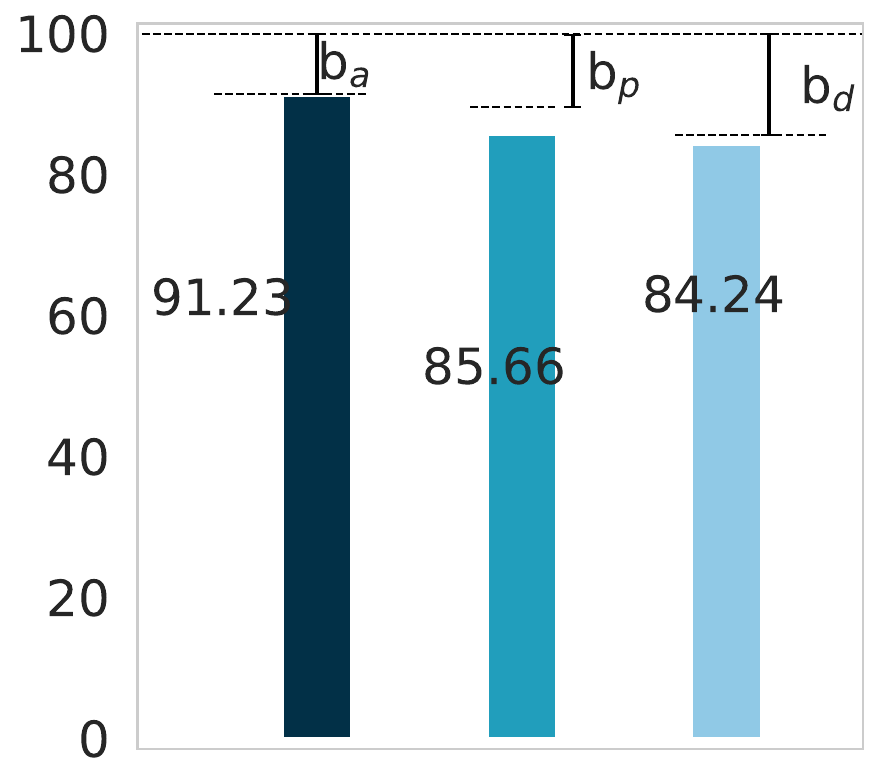}
    \caption{Moons, $b_a= 8.54$, $b_p = 10.13$, $b_d = 14.28$ (\%)}
  \end{subfigure}%
  \hfill
  \begin{subfigure}[t]{0.2\linewidth}
    \centering
    \includegraphics[width=\linewidth]{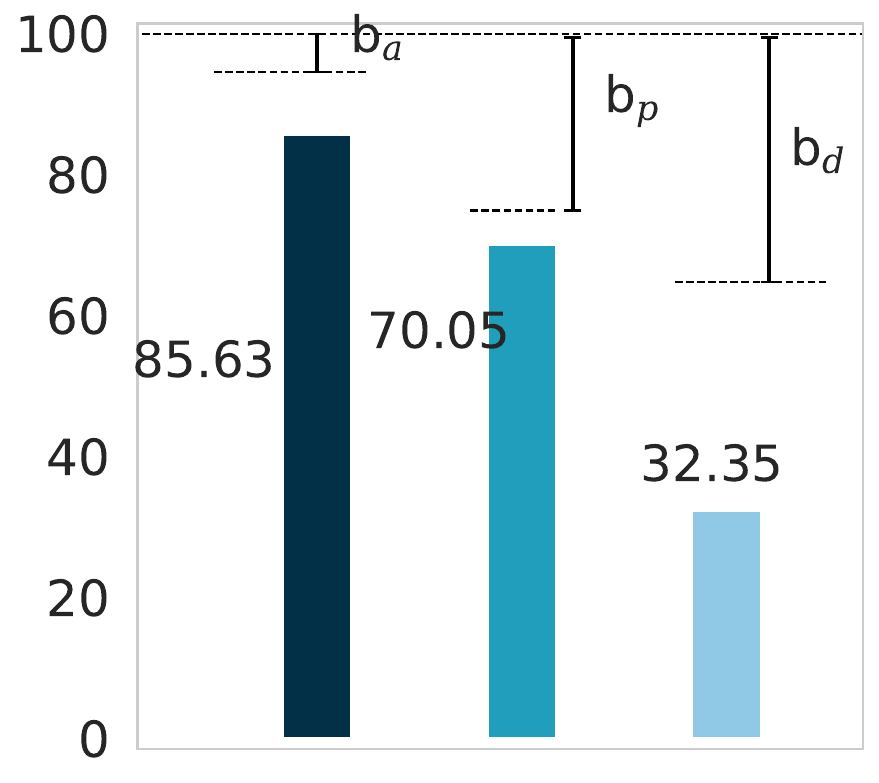}
    \caption{Chan, $b_a= 5.38$, $b_p=24.56$, $b_d= 34.67$ (\%)}
  \end{subfigure}%
  \hfill
  \begin{subfigure}[t]{0.2\linewidth}
    \centering
    \includegraphics[width=\linewidth]{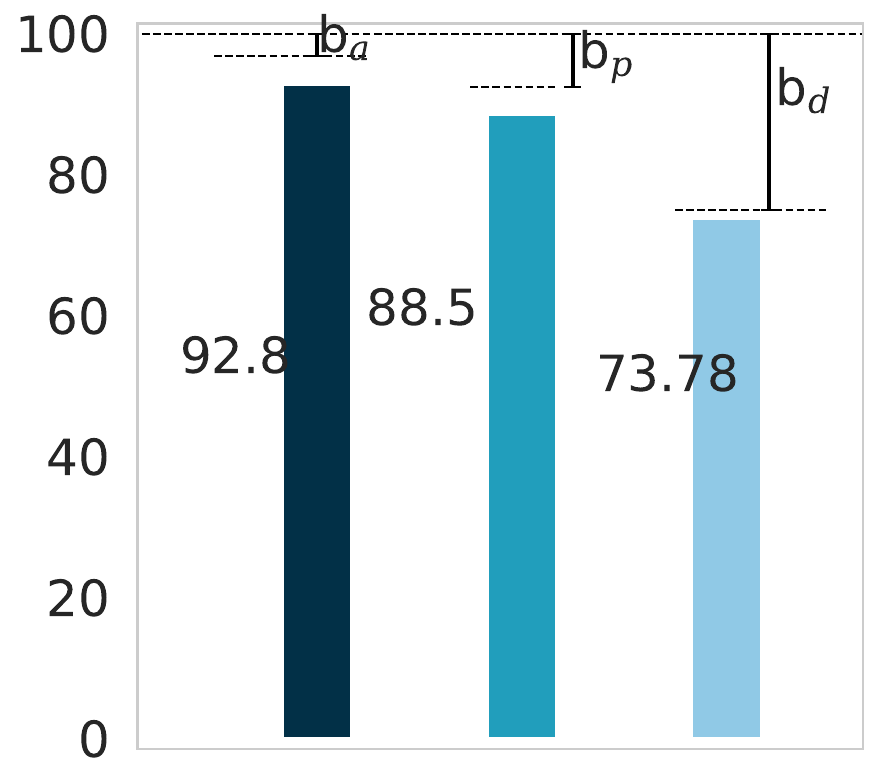}
    \caption{Fashion{\textsc MNIST}, $b_a= 3.15$, $b_p=8.91$, $b_d= 25$ (\%)}
  \end{subfigure}%
  \hfill
  \begin{subfigure}[t]{0.2\linewidth}
    \centering
    \includegraphics[width=\linewidth]{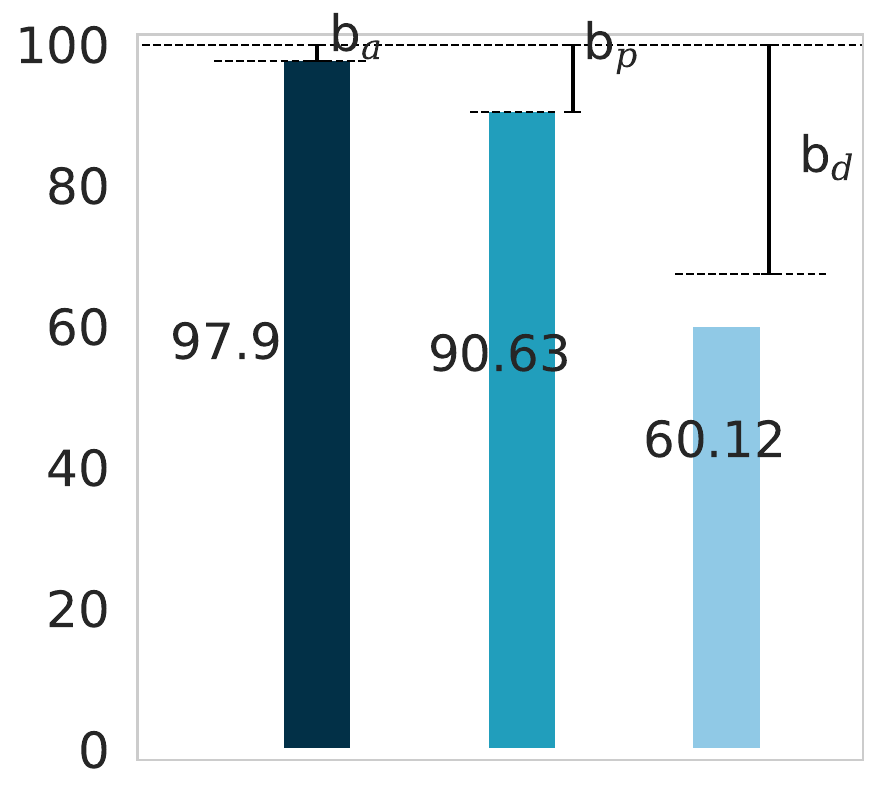}
    \caption{CIFAR-10, $b_a= 5.24$, $b_p=9.21$, $b_d= 32.51$ (\%)}
  \end{subfigure}%
  \hfill
  \caption{Comparing the SOTA classifier performance with upper bounds of vanilla accuracy ($b_a$), probabilisti  c robust accuracy ($b_p$), and deterministic robust accuracy ($b_d$) . 
  \label{fig:accuracy}}
\end{figure}

\paragraph{Is our upper bound empirically valid on existing neural networks?} To check if indeed all trained classifiers respect the theoretical upper bound of probabilistic robust accuracy on any distribution, we compare the SOTA CVaR training and the bound. The middle column of each plot in \cref{fig:accuracy} demonstrates this comparison. We observe that the SOTA probabilistic robust accuracy never exceeds our theoretical bound. Intriguingly, on certain distributions like CIFAR-10, SOTA training almost meets its upper bound with a small gap (0.2\%), while on others, a gap remains (on average 4.06\%). Theoretically, a negative gap may also occur when a classifier overfits the data samples~\cite{ishida2022performance}. Our upper bound is empirically useful in approximating the room for improvement.

\paragraph{How does probabilistic robust accuracy compare to vanilla accuracy and deterministic robust accuracy in terms of upper bounds?}

We observe in \cref{fig:accuracy} that invariably, the upper bound of probabilistic robust accuracy is lower than that of vanilla accuracy and higher than that of deterministic robust accuracy. In high-dimensional distributions, the upper bound of probabilistic robust accuracy is close to that of vanilla accuracy but over 27\% higher than that of deterministic robust accuracy. This could be a result of the curse of dimensionality and much-reduced vicinity size according to \cref{cor:linf}. On Chan, the upper bound of probabilistic robustness is close to that of deterministic robust accuracy but over 20\% lower than that of vanilla accuracy. This could be due to the high-frequency features in the distribution~\cite{zhang2024certified}. On Moons, these three bounds are close (at most 7\% difference). The reason could be that this distribution is relatively smooth.

\paragraph{What is the effect of $\kappa$ on the upper bound of probabilistic robust accuracy?}

Given different $\kappa$, the upper bound of probabilistic robust accuracy can be different. We vary $\kappa$ in $[0, 0.5)$ increasing each time by 0.01. \cref{fig:kappa} shows that this upper bound monotonically grows as $\kappa$ grows, which matches \cref{thm:triplet}. Besides, the growth is fast when $\kappa$ is small (slope $>3$ at $\kappa=0.1$), and the growth rate decreases as $\kappa$ grows (slope $<0.02$ at $\kappa=0.4$). Especially, for high-dimensional distributions, a small change in $\kappa$ when $\kappa$ is small, \emph{e.g.} $<0.1$ can significantly increase the upper bound. This could be explained by the $1/n$ order in \cref{cor:linf}.
This is encouraging as it shows that by sacrificing deterministic robustness only slightly, we can already improve the accuracy significantly.

\begin{figure}[t]
    \centering
    \begin{subfigure}[t]{0.23\linewidth}
    \centering
    \includegraphics[width=\linewidth]{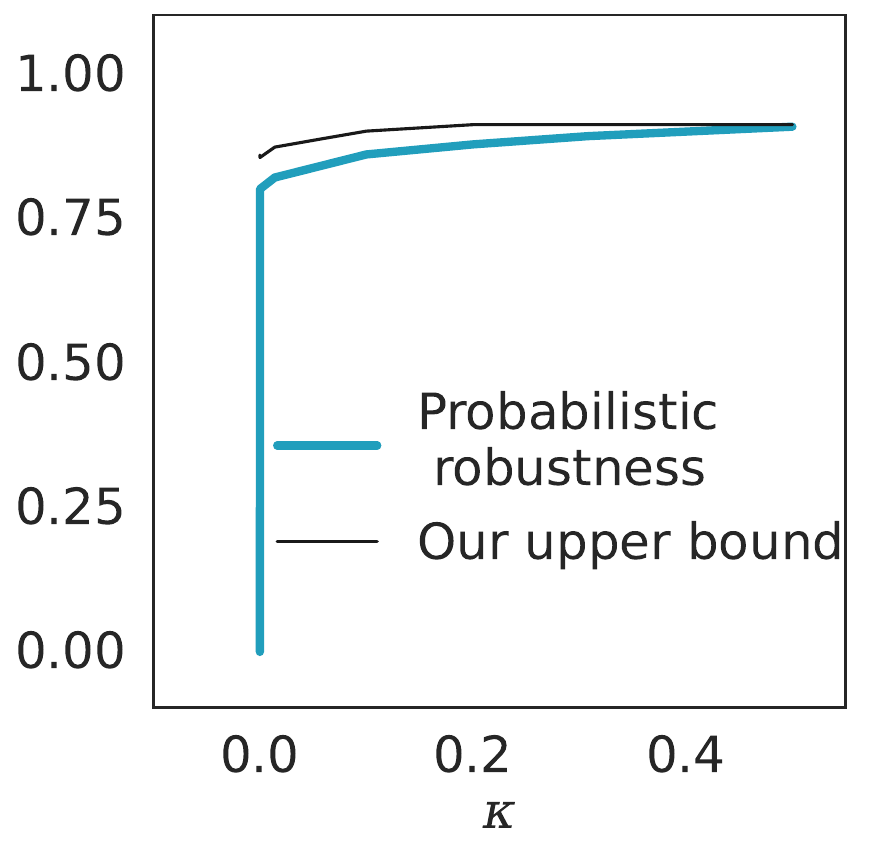}
    \caption{Moons }
    \end{subfigure}%
    \begin{subfigure}[t]{0.23\linewidth}
    \centering
    \includegraphics[width=\linewidth]{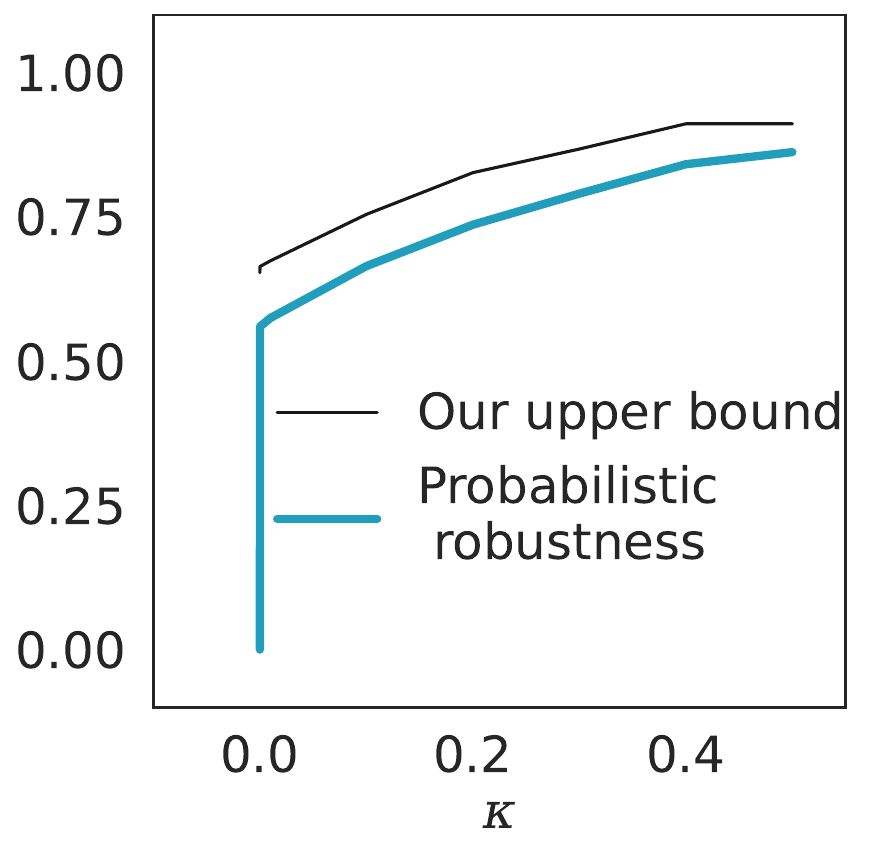}
    \caption{Chan}
    \end{subfigure}%
    \begin{subfigure}[t]{0.23\linewidth}
    \centering
    \includegraphics[width=\linewidth]{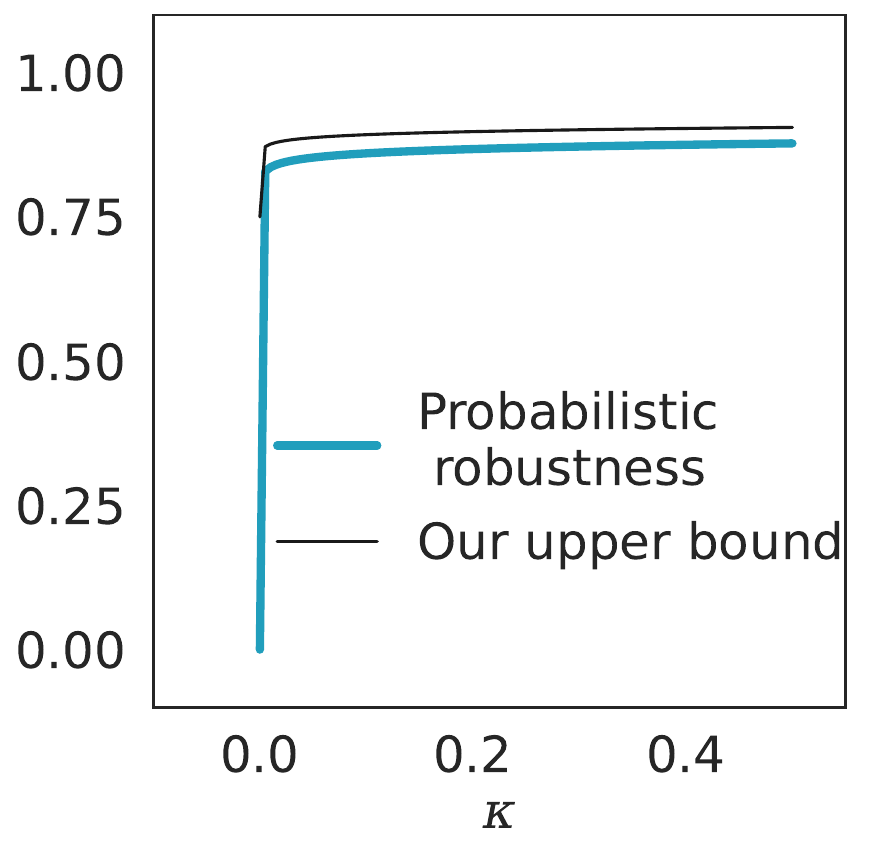}
    \caption{ FashionMNIST}
    \end{subfigure}%
    \begin{subfigure}[t]{0.23\linewidth}
    \centering
    \includegraphics[width=\linewidth]{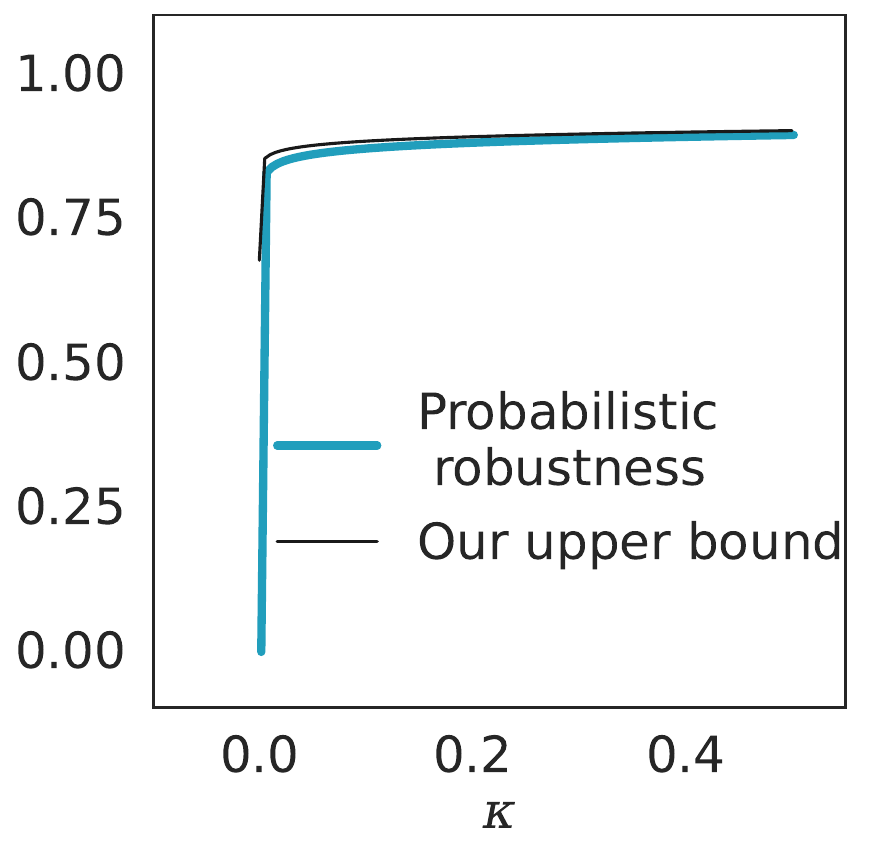}
    \caption{CIFAR-10 }
    \end{subfigure}
    \caption{As $\kappa$ increases, we plot the upper bounds of probabilistic robust accuracy as well as classifiers' probabilistic robust accuracy change in the Moons and Chan dataset.}
    \label{fig:kappa}
\end{figure}

\section{Related Work}
This work is closely related to studies on Bayes errors and probabilistic robustness. Computing the Bayes error of a given distribution has been studied for over half a century~\cite{fukunaga1975k}, and one interesting topic is to derive or empirically estimate the upper and lower bounds of the Bayes error. Various $f$-divergences, such as the Bhattacharyya distance~\cite{fukunaga1990introduction} or the Henze-Penrose divergence~\cite{berisha2015empirically,sekeh2020learning}, have been studied. Other approaches include directly estimating the Bayes error with $f$-divergence representation instead of using a bound~\cite{noshad2019learning}, computing the Bayes error of generative models learned using normalizing flows~\cite{kingma2018glow,theisen2021evaluating}, or evaluate the Bayes error from data sample reassessment~\cite{ishida2022performance}. Recent studies apply Bayes error estimation to deterministic robustness beyond a vanilla accuracy perspective ~\cite{zhang2024certified}. Our study extends this line of research and focuses on probabilistic robustness.

Improving robustness is a core topic in the recent decade~\cite{zhang2023coophance,wang2021robot}. Adversarial training considers adversarial examples in the training phase~\cite{madry2017towards,goodfellow2014explaining,zhang2019theoretically,wang2020improving}. However, adversarially trained models do not come with a theoretical guarantee~\cite{zhang2018efficient,singh2019abstract,balunovic2020adversarial}. Certified training provides this guarantee by optimising bounds from formal verification during training but compromises performance on clean inputs~\cite{shi2021fast,muller2022certified}. To mitigate these problems, probabilistic robustness methods such as PRoA~\cite{zhang2023proa} or CVaR~\cite{robey2022probabilistically} are proposed. Probabilistic robustness offers a desirable balance between robustness and accuracy, making it more applicable in real-world scenarios. 
\section{Conclusion}
\label{sec:conclusion}

We investigate probabilistic robustness by formally deriving its upper bound. We find that the optimal prediction should be the Maximum A Posteriori of predictions in the vicinity. Then, we show that any probabilistically robust input is also deterministically robust within a smaller vicinity. Thus, the upper bound of probabilistic robust accuracy can be obtained from that of deterministic robust accuracy. We verify the empirical impact of this upper bound by comparing it with SOTA training and the upper bounds of vanilla accuracy or deterministic robust accuracy.  Experiments match our theorems and show that our bounds could indicate room for improvement in practice.


\paragraph{Limitation}

A limitation of our upper bound is that \cref{thm:continuous} requires the posterior probability to calculate the probabilistic robust accuracy upper bound whereas it might be difficult to obtain the posterior for some cases. This generally occurs for Bayes uncertainty analyses~\cite{fukunaga1975k} in finding bounds of accuracy~\cite{theisen2021evaluating,nielsen2014generalized,moon2015meta} or deterministic robustness~\cite{zhang2024certified}. Yet, this can be compensated by various density estimation methods~\cite{renggli2021evaluating,zhang2024certified} or reevaluating the probability from a dataset~\cite{ishida2022performance}.

\newpage
\appendix

\section{Notations}

\subsection{Vicinity Notations}
\label{sec:vicinity}
\begin{definition}[Vicinity]
\label{def:vicinity}

    Inputs within the vicinity of an input $\bm{x}$ are imperceptible from $\bm{x}$.  We call this vicinity an $\bm{x}$-vicinity. To capture imperceptibility, an $\bm{x}$-vicinity can be denoted in (at least four) different but equivalent notations, \emph{i.e.}, distance-threshold, set, distribution, and function notations. Occasionally, an input within $\bm{x}$-vicinity is called a neighbour of $\bm{x}$.
\end{definition}

\paragraph{Distance-threshold Notation of Vicinity}

The distance-threshold notation is one of the earliest notations to depict vicinity~\cite{ma2018characterizing, athalye2018synthesizing, bhattacharya2019survey}. Here, the neighbour of a sample $\bm{x} \in \mathbb{X}$ refers to an input that lies within a certain threshold distance from $\bm{x}$. Formally, $\bm{x'}$ is within $\bm{x}$-vicinity if and only if
\begin{equation}
     d(\bm{x'}, \bm{x}) \le \epsilon
\end{equation}
where $d$ measures the distance between two inputs, and this distance needs to be smaller than a threshold $\epsilon$ to be considered imperceptible. 

Specifically, the distance function can be defined in a variety of ways \emph{e.g.}, $L^p$ norm ($p = 0, 1, 2,$ or $ \infty$) or domain-specific transformations that preserve labels, such as tilting or zooming. 
\begin{equation}\label{eq:d}
\begin{aligned}
    d(\bm{x'}, \bm{x}) &= \norm{\bm{x'} - \bm{x}}_p, \quad \text{(Additive in $ L^p $ norm), or}\\
    d(\bm{x'}, \bm{x}) &= 
    \begin{cases}
    \abs{\epsilon}, &\text{if} \quad f_\text{transform} (\bm{x}, \epsilon) = \bm{x'} ,\\
    \epsilon + 1, &\text{otherwise}
    \end{cases}  
\end{aligned}
\end{equation}
where the transformation function $f_\text{transform}: \mathbb{X} \to \mathbb{X}$ can be, for example, an image rotation with a parameter determining the degree of rotation.



\paragraph{Set Notation of Vicinity}

Here, the vicinity of a sample $\bm{x} \in \mathbb{X}$ refers to a set containing all neighbours of $\bm{x}$. Given an input, all inputs whose distance to the given input is within a certain threshold form a set, defined as a vicinity of the given input. For any $ \bm{x} \in \mathbb{X} $, its vicinity is expressed as
\begin{equation}
    \mathbb{V}(\bm{x}) = \set{\bm{x'} \mid d(\bm{x}, \bm{x'}) \le \epsilon }
\end{equation}

Since the corresponding distance function $d$ can be a representation of different distance measures, the set notation $\mathbb{V}(\bm{x})$ can also be a representation of various sets, \emph{i.e.}, $\mathbb{V}_1(\bm{x}), \mathbb{V}_2(\bm{x})$ could be $x$-vicinities defined in two different ways.

\paragraph{Function Notation of Vicinity}
The set or distance representation may be inconvenient sometimes~\cite{zhang2024certified}. We may sometimes need the notion of $\mathbf{1}_{\bm{x'}\in\mathbb{V}(\bm{x})}$ to quantify if $\bm{x'}$ is a neighbour of $\bm{x}$. For instance, if we would like to sum the marginal probability of all neighbours of $\bm{x}$, we can $\int_\mathbb{X} \mathbf{1}_{\bm{x'}\in\mathbb{V}(\bm{x})} p(\mathbf{x}=\bm{x'}) d\bm{x'}$ instead of $\int_{\mathbb{V}(\bm{x})} p(\mathbf{x}=\bm{x'}) d\bm{x'}$ to avoid a varying interval of integration.

In this case, a vicinity function, which is an equivalent form of the set, can be defined as
\begin{equation}
\label{eq:vicinity}
    v_{\bm{x}}(\bm{x'}) = \begin{cases}
        \left(\int_{\mathbb{V}(\bm{x})}d\bm{x''}\right)^{-1}, &\text{if}~~ \bm{x'}\in \mathbb{V}(\bm{x})\\
        0, &\text{otherwise}
    \end{cases}
\end{equation}
Essentially, \cref{eq:vicinity} can be viewed as a probability density function (PDF) uniformly defined over the vicinity around an input $\bm{x}$. Now we shift the x-coordinate by $\bm{x}$, we get 
\begin{equation}
    v_{\bm{0}}(\bm{x'} - \bm{x}) = \begin{cases}
        \left(\int_{\mathbb{V}(\bm{0})}d\bm{x''}\right)^{-1}, &\text{if}~~ \bm{x'} - \bm{x}\in \mathbb{V}(\bm{0})\\
        0, &\text{otherwise}
    \end{cases}
\end{equation}
Assuming that the vicinity function is translation invariant, we can drop the subscript $\bm{0}$, and use a positive constant $\epsilon_\mathrm{v}$ to represent $\int_{\mathbb{V}(\bm{0})}d\bm{x''}$, \emph{i.e.}, the volume of the vicinity. Thus, the vicinity function $v: \mathbb{X}\to \set{0, \epsilon_\mathrm{v}^{-1}}$ can be expressed as
\begin{equation}
    v(\bm{x}) = \begin{cases}
        \epsilon_\mathrm{v}^{-1} &\text{if}~~ \bm{x} \in \mathbb{V}(\bm{0})\\
        0, &\text{otherwise}
        \end{cases}
\end{equation}
An example of a one-dimensional input's vicinity is shown in \cref{fig:vicinity1}.

\paragraph{Distribution Notation of Vicinity}

When we need to sample from the vicinity, we need its distribution notation. The  distribution notation for $\bm{x}$-vicinity is $\mathcal{V}(\bm{x})$ whose PDF is denoted as $v: \mathbb{R}^n\to\mathbb{R}$. If $v(\bm{x'}-\bm{x})>0$, we say $\bm{x'}$ is within $\bm{x}$-vicinity.

An imperceptible perturbation from any $\bm{x}$ to $\bm{x'}$ means that $\bm{x'}$ is a `neighbour' of $\bm{x}$, or equivalently, $\bm{x'}$ is in the $\bm{x}$-vicinity. $\bm{x}$-vicinity is a (probabilistic) distribution $\mathcal{V}(\bm{x})$ centred at $\bm{x}$. A standard vicinity $\mathcal{V}(\bm{0})$ is centred at the origin and its PDF is denoted as $v: \mathbb{R}^n\to\mathbb{R}$. Thus, the PDF centred any specific $\bm{x}$ would be $v(\bm{x'}-\bm{x})$.
    
$v$ is typically an even and quasiconcave function. Formally, 
\begin{equation}
\begin{aligned}
    v(\bm{x}) &= v(-\bm{x})\\
    \forall t&\in[0,1]~ \forall \bm{x_1},\bm{x_2}\in \mathbb{R}^n,\quad v\left(t\bm{x_1}+(1-t)\bm{x_2}\right)\ge \min\left(v(\bm{x_1}), v(\bm{x_2})\right)
\end{aligned}
\end{equation}

A uniform $L^p$ $\bm{x}$-vicinity assumes that all inputs outside an $L^p$-norm of $\bm{x}$ are distinguishable from $\bm{x}$ and all inputs within this norm are equally imperceptible from $\bm{x}$. This $L^p$ vicinity function is captured in \cref{eq:uniform_linf} where parameter $\epsilon$ specifies a size.
    \begin{equation}
    \label{eq:uniform_linf}
        v(\bm{x'}-\bm{x}) = \begin{cases}
            \frac{\Gamma(1 + n/p)}{(2\epsilon \Gamma(1 + 1/p))^n}, &\text{if}~~ \norm{\bm{x'}-\bm{x}}_p\le\epsilon\\
            0, &\text{otherwise}
        \end{cases}
    \end{equation}
    The fraction in \cref{eq:uniform_linf} represents the inverse of the $L^p$-norm volume, where $\Gamma$ denotes the gamma function. Vicinity functions assess the likelihood of $\bm{x'}$ being a neighbour to $\bm{x}$. In the uniform $L^p$-norm context, all inputs within the norm are equally valid as neighbours and no inputs outside the norm are neighbours.  
\begin{remark}
    Adversarial examples of an input $\bm{x}$ always reside $\bm{x}$-vicinity.
\end{remark}

\begin{figure}[t]
    \centering
        \includegraphics[width=0.4\linewidth]{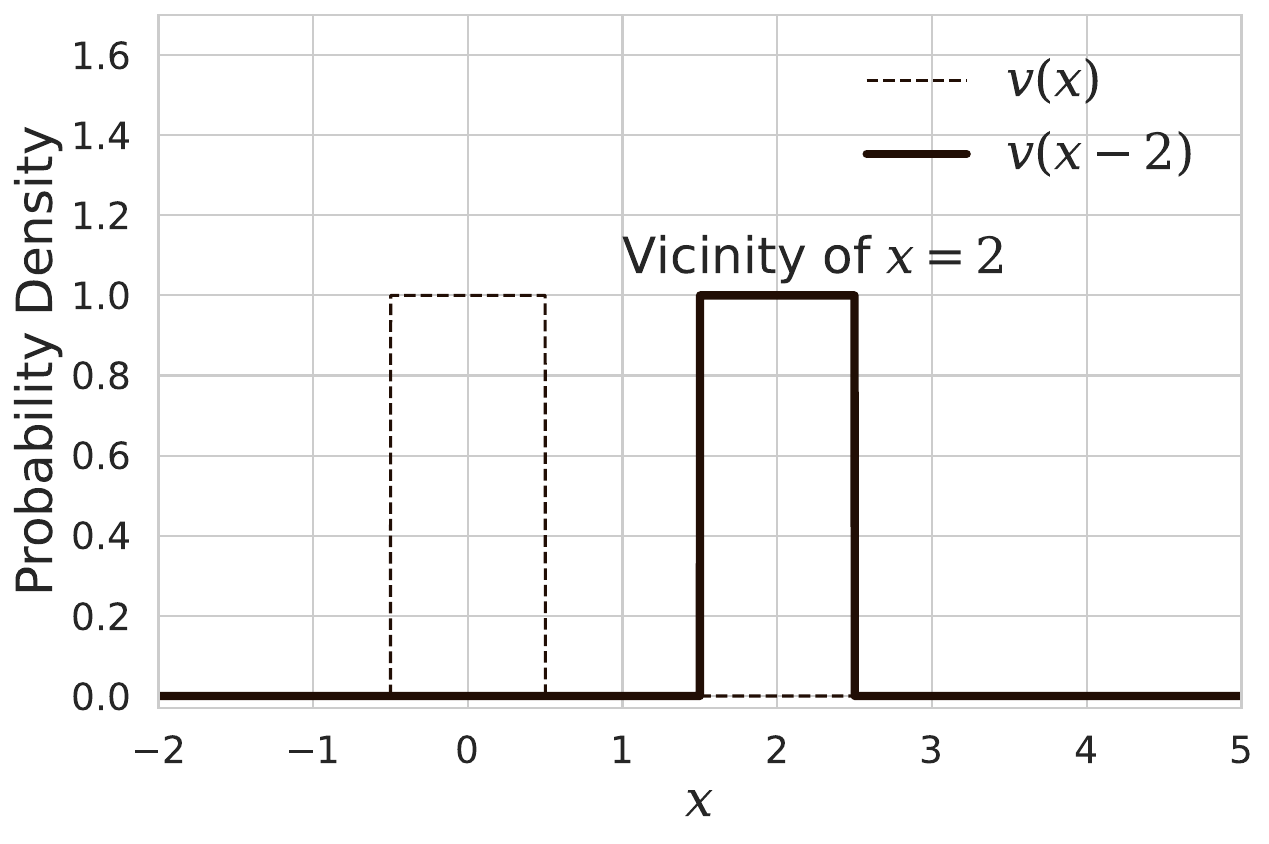}
        \caption{1D visualizations of vicinity function. This vicinity function is a rectangular function that returns a constant value if an input is in the vicinity. Vicinity function $v(\bm{x})$ is shown in dashed line ($\epsilon=0.5$). To get the vicinity at a specific input $\bm{x}=2$, we simply shift $v(\bm{x})$ along the positive direction of the x-axis by 2.}
        \label{fig:vicinity1}
\end{figure}

\section{Complete Proofs and Derivations}
\label{sec:proof}
This section provides detailed proofs for various lemmas, theorems, or corollaries. For each, we restate the original claim followed by a more comprehensive proof than what appears in the main text. Additionally, we include detailed derivations for certain equations not covered in the theorems.

\subsection{Proof of \cref{le:half}}
\label{app:half}
\begin{replemma}{le:half}
For the predictions within the vicinity (of an input $\bm{x}$) to be consistent, at most one class has a prediction probability exceeding $1-\kappa$ in this vicinity. Thus, $\kappa < \frac{1}{2}$.
\end{replemma}

\begin{proof}
    Assume that $\kappa > \frac{1}{2}$ such that any $\sum_{k=0}^{K-1} \mu_k(\bm{x})\,\mathbf{1}_{h(\bm{x}) = k} $ that is greater than or equal to $ \frac{1}{2}$ satisfies the consistency condition because $\frac{1}{2}> 1 - \kappa$. Hence, there may exist some $k_1\neq k_2$ such that $\mu_{k_1} + \mu_{k_2} \le 1$ and
    \begin{equation}
    \label[ineq]{ineq:half}
        \sum_{k=0}^{K-1} \mu_{k_1}(\bm{x})\,\mathbf{1}_{h(\bm{x}) = k_1} = \sum_{k=0}^{K-1} \mu_{k_2}(\bm{x})\,\mathbf{1}_{h(\bm{x}) = k_2} > 1-\kappa.
    \end{equation}
    The existence of such distinct indices $k_1, k_2$ implies that if prediction at $\bm{x}$ is $k_1$, it is consistent with its neighbours. Similarly, if prediction at $\bm{x}$ is $k_2$, it is consistent with its neighbours. However, since $k_1 \neq k_2$, it is not possible for the neighbours' predictions to simultaneously be consistent with both $k_1$ and $k_2$. This scenario contradicts \cref{ineq:half}, and thus the initial assumption does not hold.    
\end{proof}

\subsection{Derivation of the Combined Error Considering probabilistic robustness}
\label{app:combine}

Considering probabilistic robustness, the error at input $\bm{x}$ is a combined error of $e_\text{cor}$ and $e_\text{cns}$ at $\bm{x}$. 
As discussed, we use two intuitions to derive the combined error. First, if $e_\text{cns}(\bm{x}, h; \kappa)=1$, the combined error is always 1. Second, if $e_\text{cns}(\bm{x}, h;\mathcal{V},\kappa)=0$, the combined error equals $e_\text{cor}(\bm{x}, h; P(\textnormal{y}\mid\bm{x}))$. Note that inconsistency is a binary value that takes either 0 or 1 while incorrectness takes a real value from 0 to 1 depending on the posterior at input. In the following, we derive the combined error $e(\bm{x}, h; P(\textnormal{y}\mid\bm{x}), \kappa)$ as expressed in \cref{eq:combine}.
\begin{equation}
\begin{aligned}
    e(\bm{x}, h; P(\textnormal{y}&|\bm{x}), \kappa) = (1 - e_\text{cns}(\bm{x}, h;\mathcal{V},\kappa)) e_\text{cor}(\bm{x}, h; P(\textnormal{y}\mid\bm{x})) + e_\text{cns}(\bm{x}, h;\mathcal{V},\kappa) \\
    &= e_\text{cor}(\bm{x}, h; P(\textnormal{y}\mid\bm{x})) - e_\text{cor}(\bm{x}, h; P(\textnormal{y}\mid\bm{x})) e_\text{cns}(\bm{x}, h;\mathcal{V},\kappa) + e_\text{cns}(\bm{x}, h;\mathcal{V},\kappa) \\
    &= e_\text{cor}(\bm{x}, h; P(\textnormal{y}\mid\bm{x})) + e_\text{cns}(\bm{x}, h;\mathcal{V},\kappa) - e_\text{cor}(\bm{x}, h; P(\textnormal{y}\mid\bm{x})) e_\text{cns}(\bm{x}, h;\mathcal{V},\kappa) \\
    &= 1-(1 - e_\text{cns}(\bm{x}, h;\mathcal{V},\kappa)) (1- e_\text{cor}(\bm{x}, h; P(\textnormal{y}\mid\bm{x})))\\
    &= 1-\left(1-u\left(1- \kappa - \sum_{k=0}^{K-1} \mu_k(\bm{x})\,\mathbf{1}_{h(\bm{x}) = k} \right)\right)\left(\sum_{y=0}^{K-1} P(\textnormal{y}=y\mid \mathbf{x}=\bm{x}) \,\mathbf{1}_{h(\bm{x}) = y} \right)\\
    &= 1-u\left(\kappa - 1 + \sum_{k=0}^{K-1} \mu_k(\bm{x})\,\mathbf{1}_{h(\bm{x}) = k} \right)\left(\sum_{y=0}^{K-1} P(\textnormal{y}=y\mid \mathbf{x}=\bm{x}) \,\mathbf{1}_{h(\bm{x}) = y} \right)
\end{aligned}
\end{equation}

\subsection{Extended Proof of \cref{thm:align}}
\label{app:align}

\begin{reptheorem}{thm:align}
    If $h^*$ is optimal for the probabilistic robustness on a given distribution, \emph{i.e.}, $h^* = \argmin_h \int_{\bm{x} \in \mathbb{R}^n} e(\bm{x})p(\mathbf{x} = \bm{x}) d\bm{x}$, we would always have
    $\forall \bm{x} \in \mathbb{R}^n, h^*(\bm{x}) = \argmax_k \mu_k(\bm{x})$.
\end{reptheorem}

\begin{proof}
    Let $h_1$ and $h_2$ be two distinct classification functions such that $h_1(\bm{x}) = \argmax_k \mu_k(\bm{x})$ and $h_2(\bm{x}) \neq h_1(\bm{x})$. We want to prove $e(\bm{x}, h_1) \le e(\bm{x}, h_2)$, such that $h_1$ must be optimal for probabilistic robustness. First, we denote $k_1, k_2\in \set{0, 1, {\scriptstyle\ldots}, K-1}, k_1 = h_1(\bm{x}), k_2 = h_2(\bm{x})\neq k_1$. Then,
    \begin{equation}
    \begin{aligned}
        e(\bm{x}, h_1) - e(\bm{x}, h_2)&= 1-u\left(\kappa - 1 + \sum_{k=0}^{K-1} \mu_k(\bm{x})\,\mathbf{1}_{h_1(\bm{x}) = k} \right)\left(\sum_{y=0}^{K-1} P(\textnormal{y}=y| \mathbf{x}=\bm{x}) \,\mathbf{1}_{h_1(\bm{x}) = y} \right)\\
        & - 1+u\left(\kappa - 1 + \sum_{k=0}^{K-1} \mu_k(\bm{x})\,\mathbf{1}_{h_2(\bm{x}) = k} \right)\left(\sum_{y=0}^{K-1} P(\textnormal{y}=y| \mathbf{x}=\bm{x}) \,\mathbf{1}_{h_2(\bm{x}) = y} \right)\\
        =u\big(\kappa& - 1 + \mu_{k_2}(\bm{x}) \big)P(\textnormal{y}=k_2\mid \mathbf{x}=\bm{x})  - u\left(\kappa - 1 + \mu_{k_1}(\bm{x}) \right)P(\textnormal{y}=k_1\mid \mathbf{x}=\bm{x}).
    \end{aligned}
    \end{equation}
    Since $\mu_{k_2}(\bm{x}) \le \mu_{k_1}(\bm{x})$, we get $\mu_{k_2}(\bm{x}) \le 1/2$. Recall $\kappa < 1/2$ 
    from \cref{le:half}, we get
    \begin{equation}
        \kappa - 1 + \mu_{k_2}(\bm{x}) ~<~\frac{1}{2} - 1 + \frac{1}{2} = 0
    \end{equation}
    Consequently, when the input of a unit step function is negative, we have $u\left(\kappa - 1 + \mu_{k_2}(\bm{x}) \right) = 0$. Therefore, we get the following expression where the value of a unit step function and the value of conditional probability are both non-negative.
    \begin{equation}
        e(\bm{x}, h_1) - e(\bm{x}, h_2) =  - u\left(\kappa - 1 + \mu_{k_1}(\bm{x}) \right)P(\textnormal{y}=k_1| \mathbf{x}=\bm{x}) \le 0
    \end{equation}
    Hence, the error of $h_1$ is no greater than the error of $h_2$. This inequality applies to any input $\bm{x}$. Hence, a classification function like $h_1$ is optimal.
\end{proof}

\subsection{Proof of \cref{lemma:bound_change}}
\label{app:bound_change}

\begin{replemma}{lemma:bound_change}
A change in $\mu_k$ results from shifting an input by a certain distance $\phi$ within the vicinity. This change is bounded in any direction $\bm{\hat{\phi}}$. Formally,
    \begin{equation}
    \begin{aligned}
        \forall\bm{x} \in \mathbb{R}^n,\forall \phi\in\mathbb{R},        \Bigg(\left(\forall \bm{\hat{\phi}}\in\mathbb{S}^{n-1},~ v\left(\frac{\phi}{2}\bm{\hat{\phi}}\right)>0\right) \to\\
        \forall \bm{\hat{\phi}}\in\mathbb{S}^{n-1},~ \left( \abs{\mu_k(\bm{x} + \phi\bm{\hat{\phi}}) - \mu_k(\bm{x})} \le  \abs{1 - \min_{\bm{\hat{\phi'}}\in\mathbb{S}^{n-1}}\int_{\bm{t}\in\mathbb{R}^n}\min \left( v(\bm{t}  -\phi\bm{\hat{\phi'}}),  v(\bm{t})\right) d\bm{t}}\right)\Bigg),
    \end{aligned}
    \end{equation}
    where $\in\mathbb{S}^{n-1}$ denotes the set of all unit vectors in $\mathbb{R}^n$. 

\end{replemma}

\begin{proof}
    Each $\mu_k$ has a convolutional form as provided in \cref{eq:mu_conv}. Therefore, the change of $\mu_k$ resulting from a shift with magnitude $\phi$ in direction $\phi\bm{\hat{\phi}}$ can be expressed as
    \begin{equation}
        \mu_k(\bm{x} + \phi\bm{\hat{\phi}}) - \mu_k(\bm{x}) = \int_{\bm{t}\in\mathbb{R}^n}\mathbf{1}_{h(\bm{t})=k} ~v(\bm{x} + \phi\bm{\hat{\phi}} - \bm{t})\,d\bm{t} -\int_{\bm{t}\in\mathbb{R}^n}\mathbf{1}_{h(\bm{t})=k} ~v(\bm{x} - \bm{t})\,d\bm{t}
    \end{equation}
    For simplicity, let $\bm{\phi}=\phi\bm{\hat{\phi}}$ for the moment. Then, according to the linearity of integration, we can put the two integrands under the same integral as
    \begin{equation}
        \mu_k(\bm{x} + \bm{\phi}) - \mu_k(\bm{x}) =\int_{\bm{t}\in\mathbb{R}^n}\left(\mathbf{1}_{h(\bm{t})=k} ~v(\bm{x} + \bm{\phi} - \bm{t}) -\mathbf{1}_{h(\bm{t})=k} ~v(\bm{x} - \bm{t})\right)d\bm{t}
    \end{equation}
    Observe that we can combine like terms $\mathbf{1}_{h(\bm{t})=k}$ shared by two parts of the integrands. Thus,
    \begin{equation}
        \mu_k(\bm{x} + \bm{\phi}) - \mu_k(\bm{x}) = \int_{\bm{t}\in\mathbb{R}^n}\mathbf{1}_{h(\bm{t})=k} \left(v(\bm{x} + \bm{\phi} - \bm{t}) - v(\bm{x} - \bm{t})\right)d\bm{t}
    \end{equation}
    Applying the symmetry of the vicinity function about axes, we can get
    \begin{equation}
        \mu_k(\bm{x} + \bm{\phi}) - \mu_k(\bm{x}) = \int_{\bm{t}\in\mathbb{R}^n}\mathbf{1}_{h(\bm{t})=k} \left(v(\bm{t} -\bm{x} -\bm{\phi} ) - v(\bm{t} - \bm{x} )\right)d\bm{t}
    \end{equation}
    Next, we shift the integral limit by $+\bm{x}$, the integrand becomes $\mathbf{1}_{h(\bm{t}+\bm{x})=k}(v(\bm{t} -\bm{\phi} ) - v(\bm{t} ))$, and the interval of integration remains the same. To find the upper and lower bounds of $\mu_k(\bm{x} + \bm{\phi}) - \mu_k(\bm{x})$, we want to find those for this integrand. Observe that $\mathbf{1}_{h(\bm{t}+\bm{x})=k}$ either takes value 0 or 1. Thus, letting $\mathbf{1}_{h(\bm{t}+\bm{x})=k}=1$  if and only if $(v(\bm{t} -\bm{\phi} ) - v(\bm{t} )) > 0$ will maximise the integrand, and letting $\mathbf{1}_{h(\bm{t}+\bm{x})=k}=1$  if and only if $(v(\bm{t} -\bm{\phi} ) - v(\bm{t} )) < 0$ will minimise the integrand. Formally,
    \begin{equation}
        \min \left(0, v(\bm{t} -\bm{\phi} ) - v(\bm{t} )\right) \le \mathbf{1}_{h(\bm{t}+\bm{x})=k}(v(\bm{t} -\bm{\phi} ) - v(\bm{t} )) \le \max \left(0, v(\bm{t} -\bm{\phi} ) - v(\bm{t} )\right)
    \end{equation}    
    Substitute this inequality back into the integral gives rise to
    \begin{equation}
        \abs{\mu_k(\bm{x} + \bm{\phi}) - \mu_k(\bm{x})} \le \int_{\bm{t}\in\mathbb{R}^n}\max\left(0, v(\bm{t} -\bm{\phi} ) - v(\bm{t} )\right)d\bm{t}
    \end{equation}
    Now we add 1 minus 1 to the right-hand side. Note that integrating a probability density function $v$ across the entire domain also equals 1.
    \begin{equation}
    \begin{aligned}
        \text{RHS}
        &= 1 + \int_{\bm{t}\in\mathbb{R}^n}\max\left(0,  v(\bm{t} -\bm{{\phi}} ) - v(\bm{t} )\right)d\bm{t} - 1\\
        &= 1 + \int_{\bm{t}\in\mathbb{R}^n}\left(\max\left(0, v(\bm{t} -\bm{{\phi}} ) - v(\bm{t} )\right)-v(\bm{t} -\bm{{\phi}} )\right)d\bm{t} \\
        &= 1 + \int_{\bm{t}\in\mathbb{R}^n}\max\left(-v(\bm{t} -\bm{{\phi}} ),- v(\bm{t} )\right)d\bm{t} \\
        &= 1 - \int_{\bm{t}\in\mathbb{R}^n}\min\left(v(\bm{t} -\bm{{\phi}} ), v(\bm{t} )\right)d\bm{t} \\
    \end{aligned}
    \end{equation}
    Hence, the difference between $\mu_k(\bm{x} + \bm{\phi})$ and $ \mu_k(\bm{x})$ is bounded by the complement of a vicinity from another vicinity shifted by $\bm{\phi}$.

    Further, recall $\bm{\phi}=\phi\bm{\hat{\phi}}$, the integrand thus becomes $\min(v(\bm{t} -\phi\bm{\hat{\phi}} ), v(\bm{t} ))$, and
    \begin{equation}
        \min\left(v(\bm{t} -\phi\bm{\hat{\phi}} ), v(\bm{t} )\right) \le \min\left(v(\bm{t} ), \min_{\bm{\hat{\phi}}\,\in\,\mathbb{S}^{n-1}} v(\bm{t} -\phi\bm{\hat{\phi}} )\right)
    \end{equation}

    As a result, the problem of maximising the change in $\mu_k$ by a shifting magnitude $\phi$ is converted into the optimisation of finding the direction that results in the minimum overlap between the original vicinity function ($v$) and the vicinity function shifted by $\phi$ in that direction. The resulting upper bound can be expressed as \cref{eq:bijective}, which is re-displayed as follows.
    \begin{equation}
        \mathop{\Delta}\limits_{\max}\mu_k (\phi)
        = \max_{\bm{\hat{\phi}}\,\in\,\mathbb{S}^{n-1}} \mu_k(\bm{x} + \phi\bm{\hat{\phi}}) - \mu_k(\bm{x})
        = 1 - \min_{\bm{\hat{\phi}}\,\in\,\mathbb{S}^{n-1}}\int_{\bm{t}\in\mathbb{R}^n}\min \left( v(\bm{t}  -\phi\bm{\hat{\phi}}),  v(\bm{t})\right) d\bm{t}
    \end{equation}

    Similarly, the lower bound is the negative of the upper bound. Also, as long as two vicinities overlap, the change of $\mu_k$ is less than 1.
\end{proof}

\subsection{Extented Proof of \cref{thm:continuous}}
\label{app:continuous}

In the original proof of \cref{thm:continuous}, a value $1/2-\kappa$ is involved. Here, we explain how we get this value, and why it stands for the minimum required $\mu_k$ drop to allow an adversarial example.

\begin{proof}[Why $1/2-\kappa$ marks the minimum $\mu_k$ change to have an adversarial example]
  For any consistent input $\bm{x}$, we have $\mu_{k^*}(\bm{x}) \ge 1 - \kappa$, where $\kappa^*$ is the major prediction in $\bm{x}$-vicinity. Consider $\bm{x'}$ as a neighbour of $\bm{x}$. If $\mu_{k^*}(\bm{x'}) > 1/2$, then we know that $h(\bm{x'})=k^*$ according to \cref{thm:align}, \emph{i.e.}, $\bm{x'}$ has the same prediction as $\bm{x}$ does. In this way $\bm{x'}$ can be possibly an adversarial example of $\bm{x}$ only if $\mu_{k^*}(\bm{x'}) \le 1/2$. Thus, we get the minimum requirement of $\mu_k$ drop to allow an adversarial example to appear to be $1 - \kappa - 1/2 = 1/2-\kappa$.
\end{proof}

\subsection{Proof of \cref{cor:slope}}
\label{app:slope}

\begin{repcorollary}{cor:slope}

    There exists a finite real value such that for all inputs, the directional derivative value with respect to any arbitrary nonzero vector $\bm{\hat{\phi}}$ (unit vector) does not exceed this finite value and does not fall below the negative of this value. Formally, $\cdot$ denotes the dot product, and
    \begin{equation}
        \exists b\in\mathbb{R}, \forall \bm{x}\in \mathbb{R}^n, \forall\bm{\hat{\phi}}\in\mathbb{S}^{n-1},~  -b \leq \nabla \mu_k(\bm{x} )\cdot \bm{\hat{\phi}} \leq b.
    \end{equation}
\end{repcorollary}

\begin{proof}
    The directional derivative of $\mu_k$ in the direction of $\bm{\hat{\phi}}\in\mathbb{S}^{n-1}$ can be expressed as
    \begin{equation}
        \nabla \mu_k(\bm{x} )\cdot \bm{\hat{\phi}}
        =\lim _{\delta \to 0} \frac{\mu_k(\bm{x} +\delta \bm{\hat{\phi}} )-\mu_k(\bm{x} )}{\delta}
    \end{equation}
    According to \cref{lemma:bound_change}, we can re-express the numerator such that the directional derivative is 
    \begin{equation}
    \begin{aligned}
        \nabla \mu_k(\bm{x} )\cdot  \bm{\hat{\phi}}
        &=\lim _{\delta \to 0}\frac{\int_{\bm{t}\in\mathbb{R}^n}\mathbf{1}_{h(\bm{t}+\bm{x})=k} \left(v(\bm{t}  -\delta\bm{\hat{\phi}} ) - v(\bm{t})\right)d\bm{t}}{\delta}\\
        &=\lim _{\delta \to 0}\int_{\bm{t}\in\mathbb{R}^n}\frac{\mathbf{1}_{h(\bm{t}+\bm{x})=k} \left(v(\bm{t}  -\delta\bm{\hat{\phi}} ) - v(\bm{t})\right)}{\delta}d\bm{t}\\
        &=\lim _{\delta \to 0}\int_{\bm{t}\in\mathbb{R}^n}\mathbf{1}_{h(\bm{t}+\bm{x})=k}\frac{ v(\bm{t}  -\delta\bm{\hat{\phi}} ) - v(\bm{t})}{\delta}d\bm{t}\\
        &= \int_{\bm{t}\in\mathbb{R}^n}\mathbf{1}_{h(\bm{t}+\bm{x})=k}\lim _{\delta \to 0}\frac{ v(\bm{t}  -\delta\bm{\hat{\phi}} ) - v(\bm{t})}{\delta}d\bm{t}\\
        &=-\int_{\bm{t}\in\mathbb{R}^n}\mathbf{1}_{h(\bm{t}+\bm{x})=k}\lim _{\delta \to 0}\frac{ v(\bm{t}) - v(\bm{t}  -\delta\bm{\hat{\phi}} )  }{\delta}d\bm{t}\\ 
        &=-\int_{\bm{t}\in\mathbb{R}^n}\mathbf{1}_{h(\bm{t})=k}\lim _{\delta \to 0}\frac{ v(\bm{t}-\bm{x}) - v(\bm{t} -\bm{x} -\delta\bm{\hat{\phi}} )  }{\delta}d\bm{t}\\ 
        &=\int_{\bm{t}\in\mathbb{R}^n}\mathbf{1}_{h(-\bm{t}+\bm{x})=k}\lim _{\delta \to 0}\frac{ v(-\bm{t}) - v(-\bm{t}  -\delta\bm{\hat{\phi}} )  }{\delta}d\bm{t}\\ 
        &=-\int_{\bm{t}\in\mathbb{R}^n}\mathbf{1}_{h(-\bm{t}+\bm{x})=k}\lim _{\delta \to 0}\frac{ v(\bm{t}  +\delta\bm{\hat{\phi}} ) - v(\bm{t})}{\delta}d\bm{t}\\
        &=-\int_{\bm{t}\in\mathbb{R}^n}\mathbf{1}_{h(\bm{t}+\bm{x})=k}\left(\nabla v(\bm{t} )\cdot \bm{\hat{\phi}}\right)d\bm{t}\\
    \end{aligned}
    \end{equation}    
    The directional derivative of $\mu_k$ is maximised when the binary function takes 1 if and only if $\nabla v(\bm{t} )\cdot \bm {\phi} < 0$. Also, $v$ is even in every dimension, and when $\nabla v(\bm{t} )\cdot \bm {\phi} < 0$, it is necessary that if $\nabla v(-\bm{t} )\cdot \bm {\phi} > 0$. Thus, the direction that maximises the directional derivative of $\mu_k$ can be expressed as
    \begin{equation}
    \begin{aligned}
        \bm{\phi}^*
        &=-\begin{bmatrix}
           \int_{\bm{t}\in\mathbb{R}^n}\abs{\frac{\partial v(\bm{t} )}{\partial t_1}}d\bm{t} \\
           \int_{\bm{t}\in\mathbb{R}^n}\abs{\frac{\partial v(\bm{t} )}{\partial t_2}}d\bm{t} \\
           \vdots \\
           \int_{\bm{t}\in\mathbb{R}^n}\abs{\frac{\partial v(\bm{t} )}{\partial t_n}}d\bm{t}
         \end{bmatrix}
    \end{aligned}
    \end{equation}
    Thus, the upper bound of the directional derivative of $\mu_k$ can be written as
    \begin{equation}
    \begin{aligned}
        \nabla \mu_k(\bm{x} )\cdot \frac {\bm {\phi} }{\abs{\bm {\phi}}}
        &\le\frac{1}{2}\int_{\bm{t}\in\mathbb{R}^n}\abs{\nabla v(\bm{t} )\cdot \frac {\bm {\phi}^* }{\abs{\bm {\phi}^*}}}d\bm{t}\\
        &= \frac{1}{2\abs{\bm {\phi}^*}}\int_{\bm{t}\in\mathbb{R}^n}\abs{\sum_{i=1}^n \left(\frac{\partial v(\bm{t} )}{\partial t_i} \int_{\bm{\tau}\in\mathbb{R}^n}\abs{\frac{\partial v(\bm{\tau} )}{\partial \tau_i}}d\bm{\tau} \right)}d\bm{t}\\
    \end{aligned}
    \end{equation}
    Since function $v$ is not a delta distribution PDF, this bound is always a finite number.    
\end{proof}

\subsection{Example of \cref{cor:slope}}
\label{app:uniform1d}

\begin{example}
\label{ex:uniform1d}
    \begin{equation}
    \begin{aligned}
        &= \frac{1}{2\abs{\bm {\phi}^*}}\int_{\bm{t}\in\mathbb{R}^n}\abs{\sum_{i=1}^n \left(\frac{\partial v(\bm{t} )}{\partial t_i} \int_{\bm{\tau}\in\mathbb{R}^n}\abs{\frac{\partial v(\bm{\tau} )}{\partial \tau_i}}d\bm{\tau} \right)}d\bm{t}.\\
        &= \frac{1}{2}\int_{\bm{t}\in\mathbb{R}^n}\abs{\sum_{i=1}^n \left(\frac{\partial v(\bm{t} )}{\partial t_i}  \right)}d\bm{t}.\\ 
        &= \frac{1}{2}\int_{\bm{t}\in\mathbb{R}^n}\abs{ \left(\frac{\partial v(\bm{t} )}{\partial t_0}  \right)}d\bm{t}.\\
        &= \frac{1}{2}\int_{\bm{t}\in\mathbb{R}^n}\abs{ \left(\frac{\partial v(\bm{t} )}{\partial t_0}  \right)}d\bm{t}.
    \end{aligned}
    \end{equation}

    Suppose we have input $\bm{x}\in\mathbb{R}$, $v:\mathbb{R}\to\mathbb{R}$, and $v$ is a symmetric uniform distribution function. According to \cref{cor:slope}, the slope of $\mu_k$ in this example is within $\pm \frac{1}{2\epsilon}$. We can validate this value in \cref{eq:leibniz} using Leibniz's rule for differentiation under the integral sign.
    \begin{equation}
    \label{eq:leibniz}
       \frac{d (\mathbf{1}_{h()=k} * v)(\bm{x})}{d\bm{x}} = \frac{1}{2\epsilon}\frac{d}{d\bm{x}} \int_{-\epsilon}^{\epsilon} \mathbf{1}_{h(\bm{x} - \bm{t})=k} \,d\bm{t} = \frac{1}{2\epsilon} \big( \mathbf{1}_{h(\bm{x} - \epsilon)=k} - \mathbf{1}_{h(\bm{x} + \epsilon)=k} \big)
    \end{equation}
    The intuition of this example is that when a vicinity shifts from a region where all samples are labelled with one class to a region where all samples are labelled with another class, the slope reaches its maximum.
\end{example}

\subsection{Proof of \cref{cor:linf}}
\label{app:linf}
\begin{repcorollary}{cor:linf}
    If $h^*$ is optimal for the probabilistic robustness with respect to an $L^\infty$-vicinity on a given distribution, then the vicninty size for the deterministically robust region around each consistent is $\epsilon\left(1 - (2\kappa)^{\frac{1}{n}}\right)$.
\end{repcorollary}

\begin{proof}
    An $L^\infty$ norm looks like a $n$-dimensional cube. A two-dimensional illustration is given in \cref{fig:2d_square_example}. Generally, the vicinity function can be expressed as
    \begin{equation}
    \label{eq:linf}
        v(\bm{x}) =\begin{cases}{(2\epsilon)^{-n}}&{\text{for }} \norm{\bm{x}}_{\infty} \le \epsilon\\0&{\text{otherwise.}}\end{cases}
    \end{equation}
    According to \cref{thm:continuous}, if we would like to find the closest adversarial example to a consistent input at the upper right centre of the yellow vicinity, we first need to find a shift magnitude that causes as large as a $\mu_{k^*}$ drop by $1/2 - \kappa$. Suppose this drop goes further and the closest (probabilistucally) consistent input is found. In this way, the minimum $\mu_k$ drop from consistent point $\bm{x}$ and its (probabilistucally) consistent adversarial example is $1-\kappa - \kappa = 1 - 2\kappa$  In an $L^\infty$ vicinity scenario, the magnitude of shift can be solved based on \cref{eq:linfsolve}.

    \begin{equation}
    \label{eq:linfsolve}
        1 - 2\kappa = 1-\min_{\bm{\hat{\phi}}} \prod_{i=1}^{n} \max(0, 2\epsilon - \phi\hat{\phi}_i)
    \end{equation}
    where $\hat{\phi}_i$ is each element of the unit directional vector $\bm{\hat{\phi}}$. Since the shift is within the vicinity, we can write $\max(0, 2\epsilon - \phi\hat{\phi}_i)$ as $2\epsilon - \phi\hat{\phi}_i$. For two identical n-dimensional cubes, the fastest way to reduce the overlap is to move one of them in the diagonal direction, such that $\hat{\phi_i} = \frac{1}{\sqrt{n}}$. Then the overlap volume becomes $(2\epsilon - \phi/\sqrt{n})^n$. Then, \cref{eq:linfsolve} can be simplified as $2\kappa (2\epsilon)^n = (2\epsilon - \abs{\bm{\phi}}/\sqrt{n})^n$. Solving this equation, we get $\epsilon(1 - (2\kappa)^{\frac{1}{n}})$. This $\epsilon(1 - (2\kappa)^{\frac{1}{n}})$ also serves as the vicinity size for deterministic robustness at this input. As $n$ grows, this vicinity size decreases. As $\kappa$ grows, this vicinity size may grow.
\end{proof}

We may visualise the effect of \cref{cor:linf} using a two-dimensional example in \cref{fig:2d_square_example}. Let $\bm{x}, \bm{x'}$ be two probabilistically consistent inputs and $h(\bm{x})\neq h(\bm{x'})$. Suppose $\bm{x'}$ is the nearest adversarial example (of $\bm{x}$) that achieves its own probabilistic consistency. Thus, $\bm{x'}$ must be in the direction (that travels away from $\bm{x}$) that fastest decreases the vicinity overlap (suggested by \cref{thm:continuous}), \emph{i.e.}, the diagonal (suggested by \cref{cor:linf}). Consequently, the nearest adversarial example of $\bm{x}$ would occur on the midpoint between $\bm{x}$ and $\bm{x'}$. The shift from $\bm{x}$ to $\bm{x'}$ is $-2(\phi_1\bm{\hat{x_1}} + \phi_2\bm{\hat{x_2}})$. Each triangle accounts for a $\kappa$ portion of the original vicinity volume, and the vicinity overlap is $2\kappa$. The dashed box $\mathbb{V}^{\downarrow\kappa}$ has side length $2\phi_i$. Thus, solving $(2\epsilon - 2\phi_i)^2 = 2\kappa(2\epsilon)^2$, we get $\mathbb{V}^{\downarrow\kappa}$ has vicinity size $\phi_1 = \phi_2 = (1 - \sqrt{2\kappa})\epsilon$. Although \cref{cor:linf} specifically captures the $L^{\infty}$ scenario, we remark that other vicinity types can be analysed similarly. First, the direction that fastest decreases the vicinity overlap needs to be found. Then, the distance between the input and its nearest adversarial example can be measured.

\subsection{Extended Proof of \cref{thm:triplet}}
\label{app:triplet}
\begin{reptheorem}{thm:triplet}
    The upper bound of probabilistic robust accuracy monotonically increases as $\kappa$ grows. Further, for all tolerance levels $\kappa$, the upper bound of probabilistic robust accuracy lies between the upper bound of deterministic robust accuracy and vanilla accuracy. Formally,
    \begin{equation}
    \begin{aligned}
        \forall\kappa_1,\kappa_2.&\quad (\kappa_1 <\kappa_2)\to \min _{h}\Upsilon^+_{\textnormal{prob}}(D, h, \mathcal{V}, \kappa_1) \le \min _{h}\Upsilon^+_{\textnormal{prob}}(D, h, \mathcal{V}, \kappa_2) \\
        \forall\kappa.& \quad \min _{h} \Upsilon^+_{\textnormal{rob}}(D, h, \mathbb{V}) \le \min _{h}\Upsilon^+_{\textnormal{prob}}(D, h, \mathcal{V}, \kappa) \le \min _{h} \Upsilon^+_{\textnormal{acc}}(D, h)
    \end{aligned}
    \end{equation}    
\end{reptheorem}
\begin{proof}
    Let $\kappa_1<\kappa_2$, such that for $\kappa_1, \kappa_2$, we have their corresponding error as the following equation.
    \begin{equation}
    \begin{aligned}
        e(\bm{x}, \kappa_1) - e(\bm{x}, \kappa_2)&= 1-u\left(\kappa_1 - 1 + \sum_{k=0}^{K-1} \mu_k(\bm{x})\,\mathbf{1}_{h(\bm{x}) = k} \right)\left(\sum_{y=0}^{K-1} P(\textnormal{y}=y| \mathbf{x}=\bm{x}) \,\mathbf{1}_{h(\bm{x}) = y} \right)\\
        & - 1+u\left(\kappa_2 - 1 + \sum_{k=0}^{K-1} \mu_k(\bm{x})\,\mathbf{1}_{h(\bm{x}) = k} \right)\left(\sum_{y=0}^{K-1} P(\textnormal{y}=y| \mathbf{x}=\bm{x}) \,\mathbf{1}_{h(\bm{x}) = y} \right)\\
    \end{aligned}
    \end{equation}
    Note the cancelled 1 and the like terms in the rest of the terms, we can further write the above equation as \cref{eq:monokappa}, where we let $T = \sum_{k=0}^{K-1} \mu_k(\bm{x})\,\mathbf{1}_{h(\bm{x}) = k}$. Note that $T$ is just a temporary substituting variable, and we do not mean to use is to denote any particular quantity.
    \begin{equation}
    \label{eq:monokappa}
        e(\bm{x}, \kappa_1) - e(\bm{x}, \kappa_2)=\left(\sum_{y=0}^{K-1} P(\textnormal{y}=y| \mathbf{x}=\bm{x}) \,\mathbf{1}_{h(\bm{x}) = y} \right)\big(
        u\left(\kappa_2 - 1 + T \right)-u\left(\kappa_1 - 1 +  T\right)\big)
    \end{equation}
    Now that $e(\bm{x}, \kappa_1) - e(\bm{x}, \kappa_2)$ is a product of two expressions. Since the posterior probability $P(\textnormal{y}\mid \mathbf{x}=\bm{x})$ is non-negative, the sum of posteriors, \emph{i.e.}, the former expression, is non-negative. Thus, the sign of $e(\bm{x}, \kappa_1) - e(\bm{x}, \kappa_2)$ is the same as $u\big(\kappa_2 - 1 + T \big)  - u\left(\kappa_1 - 1 + T\right)$.
    
    Since $\kappa_1<\kappa_2$, we get $\kappa_1 - 1 + T < \kappa_2 - 1 + T$. Further, the unit step function ($u$) is monotonically increasing, we get $e(\bm{x}, \kappa_1) - e(\bm{x}, \kappa_2)\ge 0$. A more stringent $\kappa$ (\emph{i.e.}, smaller) leads to a lower or equal upper bound of probabilistic robust accuracy. For deterministic robust accuracy, $\kappa=0$, which is the least value. Thus, deterministic robust accuracy has a lower upper bound than probabilistic robust accuracy does.
    

    On the other hand, from the intuition of error combination in \cref{sec:modelling}, we get that the combined error is $e(\bm{x}) = 1-(1 - e_\text{cns}(\bm{x}, h;\mathcal{V},\kappa)) (1- e_\text{cor}(\bm{x}, h; P(\textnormal{y}\mid\bm{x})))$. As $0\le e_\text{cns}\le 1$, we get $e(\bm{x}) \ge  1-(1 - e_\text{cor}(\bm{x}, h;\kappa))$. Note that the expectation of $e_\text{cor}(\bm{x}, h;\kappa)$ is the error in accuracy. Thus, $\Upsilon^+_\text{prob}(D, h, \mathcal{V}, \kappa) \le\Upsilon^+_\text{acc}(D, h)$. So it is with their upper bounds. 
\end{proof}

\section{Additional Experiments and Plots}
In this section, we present the results of some additional experiments in which we investigate the
effect of the upper bounds and decision rules.

\subsection{Setup Details}
\label{app:setup}

The experiments are conducted with four data sets: two synthetic ones (\emph{i.e.}, Moons and Chan~\cite{chen2023evaluating}, whose distributions are illustrated in \cref{fig:moonchan}) and two standard benchmarks (\emph{i.e.}, FashionMNIST~\cite{xiao2017fashion} and CIFAR-10~\cite{krizhevsky2009learning}). Moons is used for binary classification with two-dimensional features, where each class's distribution is described analytically with specific likelihood equations, and uses a three-layer Multi-Layer Perceptron (MLP) neural network for classification. The Chan data set, also for binary classification with two-dimensional features, differs in that it does not follow a standard PDF pattern, requiring kernel density estimation (KDE) for non-parametric PDF estimation, and also uses the three-layer MLP. FashionMNIST, a collection of fashion item images, involves a 10-class classification task with 784-dimensional inputs (28$\times$28 pixel grayscale images). Each class has an equal prior probability, and their conditional distributions are estimated non-parametrically using KDE. CIFAR-10 uses images with a resolution of 32$\times$32 pixels. Similar to FashionMNIST, it has a balanced class distribution and is estimated using KDE. We use a seven-layer convolutional neural network (CNN-7)~\cite{shi2021fast} as the classifier of both FashionMNIST and CIFAR-10. We adopt a direct approach~\cite{ishida2022performance, zhang2024certified} to compute the original Bayes error and deterministic Bayes error of both real-world data sets~\cite{ishida2022performance}.
\begin{figure}[t]
  \centering
  \begin{subfigure}[t]{0.23\linewidth}
    \centering
    \captionsetup{justification=centering}
    \includegraphics[width=\linewidth]{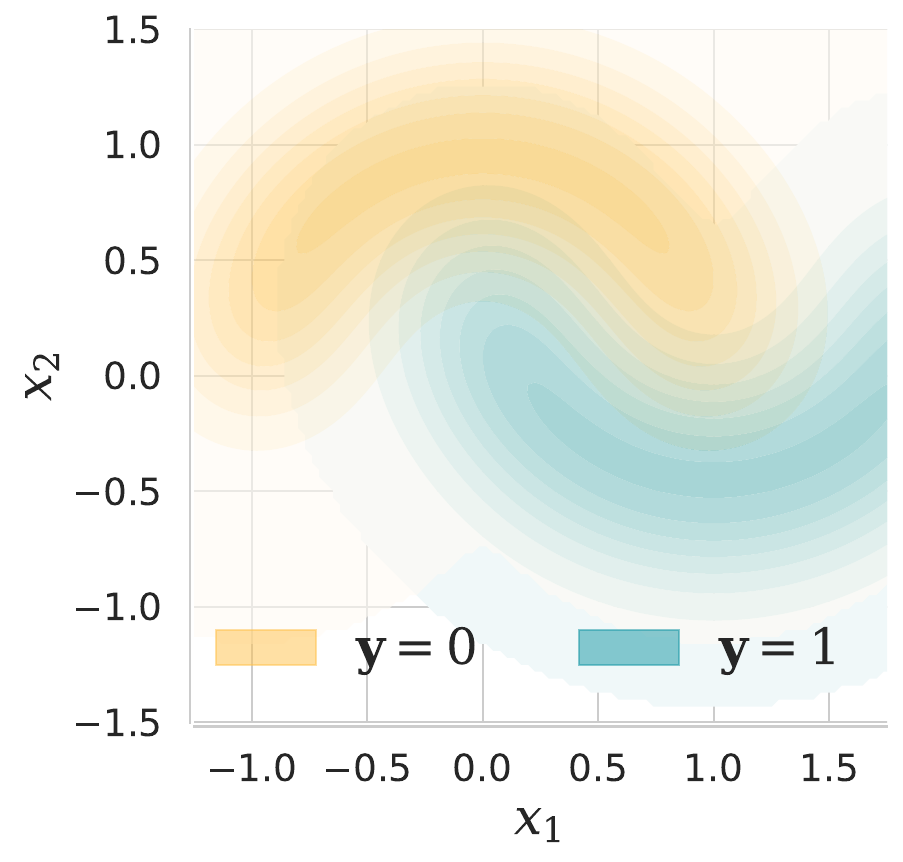}
    \caption{Moons}
  \end{subfigure}%
  \begin{subfigure}[t]{0.23\linewidth}
    \centering
    \captionsetup{justification=centering}
    \includegraphics[width=\linewidth]{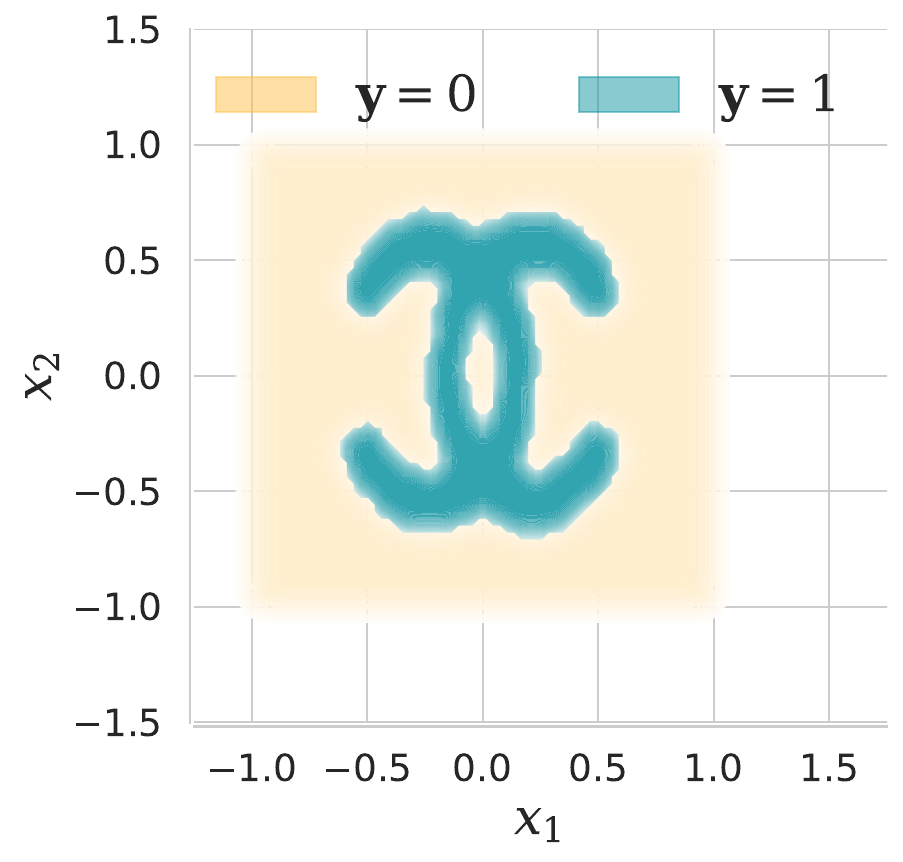}
    \caption{Chan}
  \end{subfigure}%
  \begin{subfigure}[t]{0.07\linewidth}
    \centering
    \includegraphics[width=\linewidth, trim=2.cm -2.5cm 0.32cm 0.cm, clip]{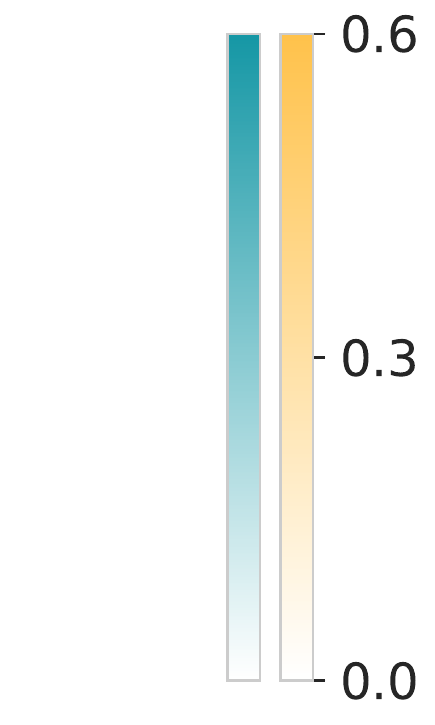}
  \end{subfigure}
  \caption{This figure illustrates the conditional distribution for (a) Moons~\cite{scikit-learn} and (b) Chan~\cite{chen2023evaluating}.
  \label{fig:moonchan}}
\end{figure}

\subsection{Statistical Significance of Experiments}
\label{app:sig}

We include confidence levels of performances of trained classifiers~\cref{tab:sig_before} and their corresponding voting classifiers~\cref{tab:sig_after}. We provide their statistical significance to support the claims addressed by our research questions.

Besides, the train/validation splits follow that of the data set~\cite{xiao2017fashion,krizhevsky2009learning} if there is already a split guideline. For Moons~\cite{scikit-learn} and Chan~\cite{chen2023evaluating}, we follow the setup in \cite{zhang2024certified}.

\begin{table}[t]
  \caption{Probabilistic robustness of classifiers before voting. The value in parentheses represents the 95\% confidence level range when we repeat the same program 100 times.}
  \label{tab:sig_before}
  \centering
  \resizebox{0.7\textwidth}{!}{
  \begin{tabular}{lcccc}
    \toprule
     &Moons &Chan &FashionMNIST &CIFAR-10  \\
    \midrule
    DA~\cite{shorten2019survey} & 85.35 ($\pm$0.020\%) & 67.96 ($\pm$0.023\%) & 84.12 ($\pm$0.000\%) & 76.07  ($\pm$0.021\%) \\
    \midrule
    RS~\cite{cohen2019certified} &  84.76 ($\pm$0.020\%)  & 64.67 ($\pm$0.002\%) & 86.29 ($\pm$0.012\%) & 87.98  ($\pm$0.038\%) \\
    \midrule
    CVaR~\cite{robey2022probabilistically} & 85.52 ($\pm$0.017\%)& 69.46 ($\pm$0.034\%) & 88.50 ($\pm$0.028\%) & 90.63 ($\pm$0.007\%) \\
    \bottomrule
  \end{tabular}}
\end{table}

\begin{table}[t]
  \caption{Probabilistic robustness of classifiers after voting. The value in parentheses represents the 95\% confidence level range when we repeat the same program 100 times.}
  \label{tab:sig_after}
  \centering
  \resizebox{0.7\textwidth}{!}{
  \begin{tabular}{lcccc}
    \toprule
     &Moons &Chan &FashionMNIST &CIFAR-10  \\
    \midrule
    DA~\cite{shorten2019survey} & 85.60 ($\pm$0.034\%) & 68.86 ($\pm$0.006\%) & 87.48 ($\pm$0.022\%) & 81.38 ($\pm$0.026\%)\\
    \midrule
    RS~\cite{cohen2019certified} & 85.18 ($\pm$0.005\%) & 66.77 ($\pm$0.008\%) & 88.13 ($\pm$0.037\%) & 88.95 ($\pm$0.008\%)\\
    \midrule
    CVaR~\cite{robey2022probabilistically} & 85.66 ($\pm$0.011\%)& 70.05 ($\pm$0.005\%) & 91.07 ($\pm$0.028\%) & 90.77 ($\pm$0.011\%)\\
    \bottomrule
  \end{tabular}}
\end{table}

\subsection{How does the sample size affect the voting effectiveness?}
\label{app:rq1}

\begin{figure}[t]
    \centering
    \includegraphics[width=0.3\linewidth]{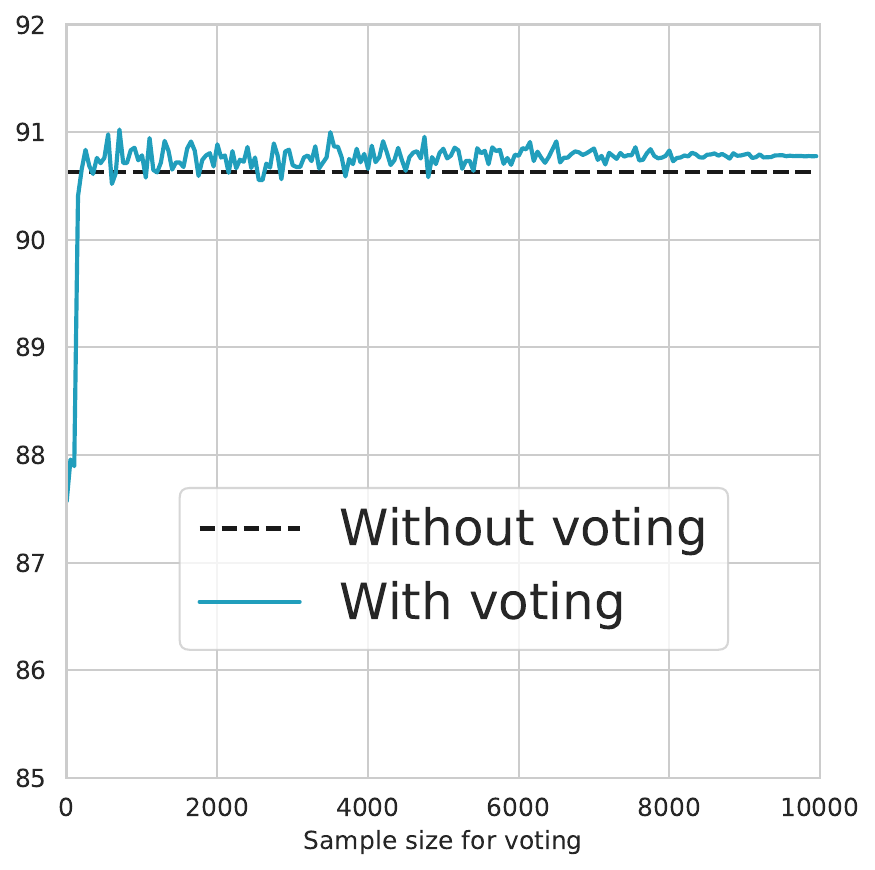}
    \caption{Probabilistic robust accuracy of voting classifier (CVaR~\cite{robey2022probabilistically} tested on CIRAR-10~\cite{krizhevsky2009learning}) as the sample size grows to 10,000.}
    \label{fig:sample}
\end{figure}
The voting process of classifier $h^\dag$ can be viewed as a process of taking the expected value of prediction within the vicinity. The law of large numbers is key to understanding the relationship between sample size and expectation. It states that as the sample size increases, the sample mean converges to the true expectation.

To validate this effect empirically, as well as to verify a suitable sample size at which the performance of the voting classifiers can be properly represented, we gradually increase the sample size from 10 to 10,000. As demonstrated in \cref{fig:sample}, at a small sample size, the voting may result in an increase or a decrease in the performance. However, as the sample exceeds 100, its positive impact on the probabilistic robust accuracy becomes more noticeable. Thus, our empirical intuition matches \cref{thm:align}.


\end{document}